\documentclass[11pt]{article}
\usepackage[preprint]{acl}
\usepackage{times}
\usepackage{latexsym}
\usepackage{booktabs}
\usepackage{multirow}
\usepackage[T1]{fontenc}
\usepackage[utf8]{inputenc}
\usepackage{microtype}
\usepackage{inconsolata}
\usepackage{xurl}
\usepackage{xcolor}
\usepackage{graphicx}
\usepackage{amsmath}
\usepackage{amsthm}
\newtheorem{proposition}{Proposition} \newtheorem{lemma}[proposition]{Lemma} \newtheorem{corollary}[proposition]{Corollary} \theoremstyle{remark} \newtheorem{remark}[proposition]{Remark} \theoremstyle{definition} \newtheorem{definition}{Definition}
\usepackage{amssymb}
\usepackage{subcaption}
\usepackage{enumitem}
\usepackage{pifont}
\usepackage{float}
\usepackage{fontawesome5}

\newcommand{\Br}{B_{\mathrm{r}}}
\newcommand{\Bsimple}{B_{\mathrm{simple}}}
\newcommand{\Dauc}{D_{\mathrm{AUC}}}
\newcommand{\dclip}{d_{\mathrm{clip}}}

\title{Two Clocks: Why Larger Rollout Batches Learn Slower but Fail on Schedule in Reinforcement Learning with Verifiable Rewards}

\author{
Yuante Li\textsuperscript{\rm 1}\thanks{Equal contribution.},\hspace{4pt}
Junwei Quan\textsuperscript{\rm 2}\footnotemark[1],\hspace{4pt}
Kai Qiu\textsuperscript{\rm 1},\hspace{4pt}
Kate Zhang\textsuperscript{\rm 1},\hspace{4pt}
Yicheng Tao\textsuperscript{\rm 1}
\\
\textsuperscript{\rm 1} Carnegie Mellon University,\hspace{4pt}
\textsuperscript{\rm 2} University of Toronto
\\
\textcolor{black}{\faEnvelope}\  \texttt{yuantel@cs.cmu.edu, junwei.quan@mail.utoronto.ca}
}

\begin{document}
\maketitle
% abstract word limit 200 words
\begin{abstract}
Reinforcement learning with verifiable rewards (RLVR) relies on costly model-generated responses, motivating larger rollout batches and square-root learning-rate scaling.
Reported to approximately preserve progress per sample with one optimizer step per rollout, this rule is less understood in practical multi-step training.
We develop a two-clock framework that separates the samples needed to make progress from those consumed before collapse.
Under a local quadratic model, the square-root rule preserves first-order drift per sample but not progress: larger batches take fewer, longer updates and lose more to curvature, yielding a learning-rate-dependent critical batch beyond which sample efficiency degrades.
A drift-threshold hypothesis predicts a complementary failure clock whose sample count remains approximately invariant along the square-root rule.
Controlled GRPO experiments on Qwen2.5-Math-1.5B across a $16\times$ batch range support both predictions.
Ratio clipping delays failure, especially for larger batches, but the progress gap persists without it.
Widening the clip trades stability for a transient speed gain.
The progress gap reappears on another dataset, a general-purpose checkpoint, and a 7B model.
Finally, we use the two-clock framework to derive a simple multi-step RLVR schedule: start below the critical batch and switch to a larger batch before failure.
Our code is available at: \href{https://github.com/WinstonLiyt/two-clocks-rlvr}{two-clocks-rlvr}.
\end{abstract}

\section{Introduction}
\label{sec:intro}

\begin{figure}[t]
\centering
\includegraphics[width=\linewidth]{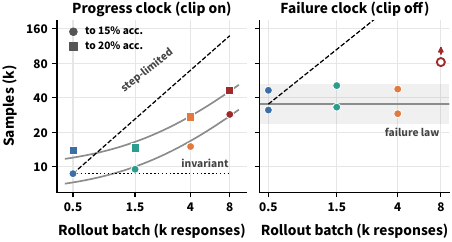}
\caption{The two clocks under the square-root rule with $\mu{=}8$.
Color marks the rollout batch, from 512 in blue to 8,192 in dark red.
Left, with the clip: samples to reach 15\% and 20\% accuracy, with the critical-batch curve (gray) between perfect invariance (dotted) and the step-limited extreme (dashed).
Right: samples to failure without the clip, against the failure-law band (gray), with an open marker for the run that never failed.
Progress slows with the batch, while failure stays on the failure law.}
\vspace{-2em}
\label{fig:clocks}
\end{figure}

Reinforcement learning with verifiable rewards (RLVR)~\citep{lambert2024tulu, guo2025deepseek} has become a widely used post-training approach for language models, spanning mathematical reasoning~\citep{hubert2026alphaproof, team2025kimi}, code~\citep{wei2026swe, luo2025deepcoder}, and broader domains scored against reference answers~\citep{su2025crossing}.
Its appeal is that large amounts of policy-generated data can be scored without human preference labels, but training remains expensive because this data must be regenerated as the policy changes: \textbf{every policy update requires fresh rollouts, and generation dominates the cost.} Practitioners therefore enlarge the \emph{rollout batch}, the number of responses generated by one policy before it is updated, and rely on the \emph{square-root rule} to transfer the learning rate across batch sizes, scaling it with the square root of the batch~\citep{malladi2022sde,hilton2022batch}.
For Group Relative Policy Optimization (GRPO)~\citep{shao2024deepseekmath} with one optimizer step per rollout, the rule approximately works: rollout batches from 512 to 8K responses follow learning curves within about one accuracy point of each other when plotted against the number of samples consumed~\citep{li2026larger}.

However, practical RLVR training rarely operates in this one-step regime.
To amortize generation, practical RLVR recipes split each rollout into mini-batches and take one optimizer step on each.
DAPO, for example, uses 16 optimizer steps per rollout~\citep{yu2026dapo}.
From the second step onward, the current policy has moved away from the policy that generated the data, so the token-level importance ratios depart from one and the PPO-style ratio clip begins to suppress gradient contributions from tokens whose ratios move beyond the clipping range.

In our multi-step GRPO experiments, three phenomena appear together under the square-root rule: \textbf{\ding{192} Progress slows with batch size.} Larger rollout batches learn more slowly per sample.
\textbf{\ding{193} Failure keeps time in samples.} Without the clip, larger batches collapse after the same number of samples as smaller ones, despite taking far fewer optimizer steps.
\textbf{\ding{194} Clipping helps larger batches most.} Although clipping is often viewed as a bottleneck on learning~\citep{qi2026rethinking,yang2026clipping}, its stabilizing effect is strongest for the larger batches.
Figure~\ref{fig:clocks} visualizes this split into two clocks: progress becomes increasingly sample-inefficient with larger rollout batches, while failure stays locked to the number of samples consumed.
Prior work studies these effects separately: batch-size invariance~\citep{hilton2022batch,li2026larger}, critical-batch behavior with the learning rate tuned at every batch~\citep{mccandlish2018empirical}, and clipping at a fixed batch size.
None explains why all three emerge together under the same batch-dependent learning-rate rule.

We explain these phenomena with a \textbf{two-clock framework} of batch scaling.
Its central observation is that \textit{the square-root rule preserves the policy's drift per sample, the persistent displacement it accumulates, but not the progress that drift produces.} A larger rollout batch takes fewer optimizer steps on the same samples, and the square-root rule makes each step longer.
Under a local quadratic model, these fewer, longer steps accumulate the same drift per sample but lose more progress to curvature.
This gives a \underline{\textbf{progress clock}}, which counts samples to a target accuracy and slows beyond a learning-rate-dependent critical batch.
If collapse occurs when accumulated drift reaches a threshold, the same scaling rule gives a \underline{\textbf{failure clock}} that is invariant in samples.
The progress clock follows from the local quadratic model, and the failure clock rests on a drift-threshold hypothesis that we test.

We test the two-clock model with controlled GRPO experiments that vary rollout batch size, learning rate, and clipping over a sixteenfold range of rollout batches.
The progress clock departs from sample invariance beyond a critical batch: with eight steps per rollout, as the learning rate rises tenfold, the sample-cost exponent climbs from 0.15 to 0.44.
Unlike the tuned-learning-rate picture~\citep{mccandlish2018empirical}, the critical batch therefore shrinks with the learning rate assigned by the square-root rule and lies well below the gradient noise scale.
The failure clock, in contrast, does not move along the rule: without the clip, a 4,096 batch fails after 56 to 80 optimizer steps and a 512 batch after 648, yet both fail after 30 to 50k samples.
Restoring the clip delays collapse twofold for the small batch and fourfold for the large one, and the progress gap is present with or without it.
The progress gap reappears on a second dataset, a second checkpoint, and a 7B model.
The separation between the two clocks further yields a simple batch schedule: \textbf{start below the critical batch for sample-efficient progress, then switch to a larger batch when throughput or stability makes it preferable.}
The ramp approaches the best fixed batch while generating three quarters of its samples at the large batch.

To summarize, our contributions are as follows.
\begin{itemize}
[nosep,leftmargin=12pt,topsep=0pt,itemsep=1pt,parsep=0pt]
\item[\ding{192}] We introduce a two-clock framework for multi-step GRPO, in which a local model derives a learning-rate-dependent critical batch for progress, while a drift-threshold hypothesis predicts failure at a fixed number of samples along
the square-root rule.
\item[\ding{193}] We test these predictions across batch size, learning rate, clipping, model scale, and dataset, with failure-law fits that favor drift scaling over noise-driven alternatives and clip and replay controls showing that the clip delays failure while the progress gap persists without it.
\item[\ding{194}] We derive a practical batch schedule for multi-step RLVR from the two clocks:
start below the critical batch and switch to a larger batch before failure. In our experiments, this schedule approaches the best fixed-batch performance.
\end{itemize}

\section{Preliminaries}
\label{sec:prelim}

\paragraph{GRPO with a ratio clip.} For each prompt, the policy samples $G$ responses and receives rule rewards $r_i\in\{0,1\}$, which GRPO normalizes within the group into advantages $A_i=(r_i-\bar r)/(\sigma_r+\epsilon)$~\citep{shao2024deepseekmath}.
For token $y_t$, let $\rho_t=\pi_\theta(y_t\mid\cdot)/\pi_{\theta_{\mathrm{old}}}(y_t\mid\cdot)$ be the importance ratio between the current policy and the policy that generated the response.
We use the DAPO objective, which averages $\min\big(\rho_tA_t,\ \mathrm{clip}(\rho_t,1{-}\varepsilon_{\mathrm{low}},1{+}\varepsilon_{\mathrm{high}})A_t\big)$ over tokens with $(\varepsilon_{\mathrm{low}},\varepsilon_{\mathrm{high}})=(0.2,0.28)$~\citep{yu2026dapo} and a dual clip at $c{=}10$ on negative advantages~\citep{ye2020mastering}.
A token whose ratio leaves this range in the direction favored by its advantage is \emph{clipped}, so its contribution to the policy gradient is zero.
At the first optimizer step after a rollout, every ratio equals one, so the clip can act only from the second step onward.

\begin{definition}[Batches and the square-root rule]\label{def:rule}
A rollout of $P$ prompts with $G$ responses each contains $\Br=PG$ responses, the \emph{rollout batch}.
We split these into $\mu$ mini-batches and take one optimizer step on each, so each step sees the \emph{optimizer batch} $b=\Br/\mu$.
At fixed $\mu$, the \emph{square-root rule} sets $\eta\propto\sqrt{\Br}$, equivalently $\eta\propto\sqrt b$.
As in common practice~\citep{li2026larger}, Adam's other hyperparameters stay fixed.
\end{definition}

\paragraph{Sample efficiency.} Because generation dominates cost, we compare runs at equal numbers of consumed responses $N$.
Let $J_{\Br}(N)$ be the validation accuracy after $N$ responses and $N_\tau(\Br)$ the first $N$ at which it reaches a target $\tau$.
\begin{definition}[Sample cost]\label{def:cost}
The square-root rule is \emph{sample-invariant} if $J_{\Br}(N)$ does not depend on $\Br$.
The \emph{sample-cost exponent} $\alpha=\mathrm{d}\log N_\tau/\mathrm{d}\log\Br$ is $0$ under perfect invariance and $1$ when doubling the rollout batch doubles the samples needed.
\end{definition}

\begin{definition}[Failure point]\label{def:fail}
The \emph{failure point} $N_{\mathrm{fail}}$ of a run is the number of consumed responses at which it first meets a failure criterion fixed in advance (\S\ref{sec:setup}).
A run that does not fail within its training budget is \emph{right-censored}: the budget is a lower bound on its failure point.
\end{definition}

\section{Method: A Two-Clock Model of Batch Scaling}
\label{sec:method}\label{sec:model}

\begin{figure*}[t]
\centering
\includegraphics[width=\textwidth]{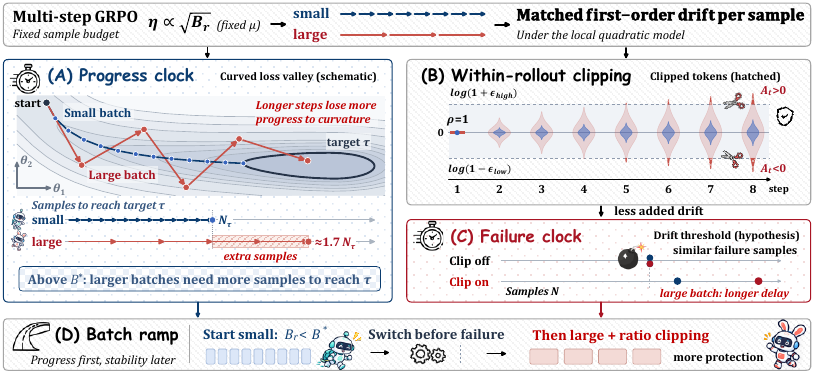}
\caption{Schematic of the two-clock model, with small (blue) and large (red) batches at the same first-order drift per sample.
(A) The large batch covers this drift in fewer, longer steps that lose more progress to curvature, so it needs more samples to reach the target $\tau$.
(B) Token log-ratios spread with each step, about $\sqrt8$ times faster for the large batch, and the clip cuts tokens outside the band.
(C) Unclipped, both fail at similar sample counts, and the clip delays failure, most for the large batch.
(D) A batch ramp thus starts below $B^*$ and switches before failure.}
\label{fig:framework}
\end{figure*}

We now formalize the two-clock picture.
We model one optimizer step on a locally quadratic loss and ask which per-sample quantities the square-root rule preserves.
\textbf{The model first yields a progress law from curvature, then adds a drift-threshold hypothesis for failure, and finally places the ratio clip on the drift accumulated within a rollout.} Figure~\ref{fig:framework} summarizes the picture.
Detailed proofs and extensions are provided in App.~\ref{app:theory}.

\paragraph{Setting.} Let $\hat g=\bar g+\xi$ be the policy-gradient estimate from an optimizer batch of $b$ responses, where $\bar g$ is the population gradient, $\mathbb{E}\,\xi=0$, $\mathrm{Cov}\,\xi=\Sigma/b$, and $\sigma^2=\mathrm{tr}\,\Sigma$.
One optimizer step is $\Delta\theta=-\eta\,u(\hat g)$, and in the main text we use a scalar Adam-type normalizer
\begin{equation}
u(\hat g)=\hat g/\nu,\qquad \nu^2=\mathbb{E}\|\hat g\|^2,
\label{eq:normaliser}
\end{equation}
treating the per-coordinate version in App.~\ref{app:adam}.
We make two assumptions for the progress clock:
\begin{enumerate}[leftmargin=*,itemsep=0pt,label=(A\arabic*)]
  \item \emph{Noise-dominated normalization.} The optimizer batch lies well below the gradient noise scale, so $\nu\approx\sigma/\sqrt b$.
        Our optimizer batches are 60 to 2000 times below the noise scale measured in App.~\ref{app:replay}.
  \item \emph{Local quadratic loss.} Over one step the loss changes by $\langle\bar g,\Delta\theta\rangle+\tfrac12\Delta\theta^{\top}H\Delta\theta$, with $H$ the local Hessian.
        For progress we treat the mini-batches within a rollout as fresh and ignore policy staleness, whose effect on importance ratios and clipping we model separately below.
\end{enumerate}

\paragraph{What the square-root rule fixes.} Under the square-root rule, a larger optimizer batch takes longer steps but fewer of them, and the two effects cancel in the displacement accumulated per sample.
Under (A1) the normalizer shrinks as $1/\sqrt b$ while the rule raises the learning rate as $\sqrt b$, so writing $\eta=\kappa\sqrt b$ the mean step is
\begin{equation}
\mathbb{E}[\Delta\theta]=-\frac{\kappa b}{\sigma}\,\bar g,
\label{eq:meanstep}
\end{equation}
while a fixed sample budget contains only $1/b$ as many steps.
The persistent displacement per consumed response, which we call the \emph{drift per sample},
\begin{equation}
d=\frac{\kappa\,\|\bar g\|}{\sigma},
\label{eq:driftps}
\end{equation}
is therefore independent of the batch (App.~\ref{app:progressproof}).
Here $\kappa$, the learning rate per square root of the optimizer batch, is what the rule holds fixed.

\paragraph{The progress clock.} The square-root rule preserves the drift per sample, but equal movement need not produce equal progress on a curved loss.

\begin{proposition}[Progress per sample]\label{prop:alpha}
Under (A1)--(A2), with $\eta=\kappa\sqrt b$, the expected loss decrease per consumed response is
\begin{equation}
r(b)=r_0-C_{\mathrm{curv}}\,b,\qquad C_{\mathrm{curv}}=\frac{\kappa^2\,\bar g^{\top}H\bar g}{2\sigma^2},
\label{eq:rate}
\end{equation}
where $r_0$ does not depend on $b$.
\end{proposition}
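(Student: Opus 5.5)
The plan is to compute the expected one-step loss change under (A2), take expectations over the noise $\xi$ using (A1), and divide by $b$, the number of responses each step consumes.

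First I substitute $\Delta\theta=-\eta\,\hat g/\nu$ with $\nu=\sigma/\sqrt b$ and $\eta=\kappa\sqrt b$, so that $\Delta\theta=-(\kappa b/\sigma)\,\hat g$ exactly, not only in mean; this is the computation behind \eqref{eq:meanstep}. The expected loss decrease over one step is then
\[
-\mathbb{E}\Big[\langle\bar g,\Delta\theta\rangle+\tfrac12\Delta\theta^{\top}H\Delta\theta\Big]
=\frac{\kappa b}{\sigma}\|\bar g\|^2-\frac{\kappa^2b^2}{2\sigma^2}\,\mathbb{E}\big[\hat g^{\top}H\hat g\big].
\]
I expand $\mathbb{E}[\hat g^{\top}H\hat g]=\bar g^{\top}H\bar g+\mathrm{tr}(H\Sigma)/b$, using $\mathbb{E}\xi=0$ and $\mathrm{Cov}\,\xi=\Sigma/b$.

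Next I divide by $b$ to get the decrease per consumed response:
\[
r(b)=\frac{\kappa\|\bar g\|^2}{\sigma}-\frac{\kappa^2\,\mathrm{tr}(H\Sigma)}{2\sigma^2}-\frac{\kappa^2\,\bar g^{\top}H\bar g}{2\sigma^2}\,b .
\]
The first two terms do not depend on $b$. I collect them as
\[
r_0=\frac{\kappa\|\bar g\|^2}{\sigma}-\frac{\kappa^2\,\mathrm{tr}(H\Sigma)}{2\sigma^2},
\]
which is the first-order gain $d\,\|\bar g\|$ minus a batch-independent noise-curvature penalty. The last term is $-C_{\mathrm{curv}}\,b$ with $C_{\mathrm{curv}}$ exactly as in \eqref{eq:rate}.

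The point needing care is the per-step-to-per-sample conversion. Treating the mini-batches as fresh, as (A2) allows, lets consecutive steps be summed at a fixed local $\bar g$, $H$ and $\Sigma$, so that a budget of $N$ samples gives $N/b$ identical expected decrements.

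The main obstacle is justifying the deterministic normalizer $\nu=\sigma/\sqrt b$. Strictly, $\nu^2=\|\bar g\|^2+\sigma^2/b$, and (A1) is what discards the $\|\bar g\|^2$ term. I would note that keeping it adds corrections of relative order $b\|\bar g\|^2/\sigma^2$, i.e.\ the batch divided by the noise scale. These are negligible in the stated regime, and I would defer them to the extensions in App.~\ref{app:theory}.
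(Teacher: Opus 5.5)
Your proposal is correct and follows the paper's own proof in App.~\ref{app:progressproof}. Both replace $\nu$ by $\sigma/\sqrt b$ under (A1), expand the quadratic model of (A2) in expectation using $\mathbb{E}\,\xi^{\top}H\xi=\mathrm{tr}(H\Sigma)/b$, and divide by $b$, recovering the same $r_0$ and $C_{\mathrm{curv}}$; the only cosmetic difference is that you substitute $\eta=\kappa\sqrt b$ before expanding, whereas the paper keeps $\eta$ general until the last step.
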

The derivation and the form of $r_0$ are given in App.~\ref{app:progressproof}.
Progress per sample falls linearly with the batch because a larger batch covers the same drift in fewer, longer steps, and those steps lose more progress to curvature.
Reaching a target loss decrease $\Delta L_\tau$ requires $N_\tau=\Delta L_\tau/r(b)$ responses, and to first order in $b$, with $b=\Br/\mu$,
\begin{align}
N_\tau(\Br)&=N_\tau^{\min}\Big(1+\frac{\Br}{B^{*}}\Big),\label{eq:progress}\\
B^{*}&=\mu\Big(\frac{2\sigma\|\bar g\|^2}{\kappa\,\bar g^{\top}H\bar g}-B_{\mathrm{noise}}\Big),\label{eq:bstar}
\end{align}
with $B_{\mathrm{noise}}=\mathrm{tr}(H\Sigma)/\bar g^{\top}H\bar g$.

Unlike the critical batch of \citet{mccandlish2018empirical}, which equals $B_{\mathrm{noise}}$ when the learning rate is tuned at every batch, $B^{*}$ carries a term that falls as $1/\kappa$ because the rule assigns the learning rate.
$B^{*}$ therefore decreases as the learning rate grows and can lie far below the gradient noise scale (App.~\ref{app:progressproof}).
Beyond the small-batch regime we use Eq.~\eqref{eq:progress} as a fitting form.

\paragraph{The failure clock.} For failure we add one hypothesis, which \S\ref{sec:failure} tests.
\begin{enumerate}[leftmargin=*,itemsep=0pt,label=(A\arabic*),start=3]
  \item \emph{Drift threshold.} A run fails when its accumulated drift, $\|\sum_j\mathbb{E}\,\Delta\theta_j\|$, reaches a threshold $D$ over a stretch in which the direction of the mean gradient changes slowly.
\end{enumerate}

\begin{proposition}[Failure law]\label{rem:fail}
Under (A1)--(A3) with a constant learning rate, a run fails after
\begin{equation}
N_{\mathrm{fail}}=C\,\frac{\sqrt b}{\eta},\qquad C=\frac{D\,\sigma}{\|\bar g\|},
\label{eq:law}
\end{equation}
responses.
A noise-driven failure mechanism would instead give $N_{\mathrm{fail}}\propto b/\eta^2$.
\end{proposition}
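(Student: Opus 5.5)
The plan is to reuse the mean-step computation behind Eq.~\eqref{eq:meanstep} and add up the mean displacements until (A3) fires. Under (A1) the normalizer is $\nu\approx\sigma/\sqrt b$, so for a general constant learning rate $\eta$ (not necessarily on the square-root rule) one step has mean
\begin{equation*}
\mathbb{E}[\Delta\theta_j]=-\frac{\eta\sqrt b}{\sigma}\,\bar g_j ,
\end{equation*}
where $\bar g_j$ is the population gradient at step $j$. The first step is therefore to write the accumulated drift after $n$ steps as $\frac{\eta\sqrt b}{\sigma}\big\|\sum_{j\le n}\bar g_j\big\|$.

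The second step uses the slow-direction clause of (A3). Over the relevant stretch we set $\bar g_j\approx\|\bar g\|\,e$ for a fixed unit vector $e$, treating the gradient norm as roughly constant as well, or as its average over the stretch. The sum is then approximately $n\|\bar g\|$, and the drift grows linearly:
\begin{equation*}
\Big\|\sum_{j\le n}\mathbb{E}\,\Delta\theta_j\Big\|\approx n\,\frac{\eta\sqrt b\,\|\bar g\|}{\sigma}.
\end{equation*}
Setting this equal to $D$ gives the failure step $n_{\mathrm{fail}}=D\sigma/(\eta\sqrt b\,\|\bar g\|)$. Each step consumes $b$ responses, so
\begin{equation*}
N_{\mathrm{fail}}=b\,n_{\mathrm{fail}}=\frac{D\sigma}{\|\bar g\|}\,\frac{\sqrt b}{\eta},
\end{equation*}
which is Eq.~\eqref{eq:law}. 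As a check, on the rule $\eta=\kappa\sqrt b$ this becomes $N_{\mathrm{fail}}=D/d$, with $d$ from Eq.~\eqref{eq:driftps}. So the failure point is batch-invariant along the square-root rule, in line with the failure clock.

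For the contrast claim, I would model the noise-driven alternative as failure when the random-walk displacement $\big\|\sum_j(\Delta\theta_j-\mathbb{E}\Delta\theta_j)\big\|$ reaches $D$. Each step's noise part is $-\eta\xi_j/\nu$, whose second moment is $\eta^2\,\mathrm{tr}(\Sigma/b)\,b/\sigma^2=\eta^2$ under (A1). Over $n$ independent steps the root-mean-square displacement is therefore $\eta\sqrt n$. Setting $\eta\sqrt n=D$ gives $n=D^2/\eta^2$, and hence $N_{\mathrm{fail}}=bD^2/\eta^2\propto b/\eta^2$.

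The main obstacle is justifying the linear accumulation in the second step. This is where (A3)'s slow-direction clause does all the work, and I would state it explicitly as an approximation: the mean gradient keeps a nearly fixed direction and magnitude over the $n_{\mathrm{fail}}$ steps. I would also note that the minibatches within a rollout are treated as fresh, as in (A2), so staleness and clipping do not enter the drift count. Both points are modelling hypotheses rather than derived facts, and I would flag them as such.
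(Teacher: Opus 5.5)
Your proposal is correct and matches the paper's proof essentially step for step. You take the mean step $-(\eta\sqrt b/\sigma)\bar g_j$ from (A1), let the means add collinearly under (A3) to get $k_{\mathrm{fail}}=D\sigma/(\eta\sqrt b\,\|\bar g\|)$, multiply by $b$, and for the contrast use the same random walk (per-step noise variance $\eta^2$, so $N_{\mathrm{fail}}\approx bD^2/\eta^2$), with your check $N_{\mathrm{fail}}=D/d$ on the rule agreeing with the paper's $C/\kappa$.
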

The derivation is given in App.~\ref{app:failproof}.
Under the square-root rule, $\eta=\kappa\sqrt b$, so Eq.~\eqref{eq:law} reduces to
\begin{equation}
N_{\mathrm{fail}}=\frac{C}{\kappa},
\label{eq:lawrule}
\end{equation}
which is independent of the batch.

\paragraph{Where the clip acts.} Within a rollout, the importance ratio records how far the current policy has moved from the policy that generated the token.
To first order, after $k$ optimizer steps
\begin{equation}
\log\rho_t\approx\sum_{j<k}\langle\Delta\theta_j,s_t\rangle,\qquad s_t=\nabla_\theta\log\pi_\theta(y_t).
\label{eq:logratio}
\end{equation}
Because the square-root rule gives larger batches longer steps, the spread of log-ratios over tokens, the \emph{ratio spread}, grows faster within a rollout, by a factor $\sqrt{\Br^{L}/\Br^{S}}$ at matched $k$ (App.~\ref{app:ratio}).
The clip, which zeroes the contribution of tokens whose ratio crosses its bound in the direction favored by their advantage, therefore acts more strongly on large batches.
Since every ratio resets at the next rollout, the clip limits how much drift each rollout adds rather than undoing accumulated drift, and it does not generate the curvature term behind the progress gap.

\paragraph{Predictions.} The model gives three testable predictions, corresponding to the three phenomena of the introduction.
\begin{enumerate}[leftmargin=*,itemsep=1pt,label=\textbf{P\arabic*.}]
  \item \emph{Progress slows beyond a critical batch.} Sample efficiency degrades with the batch as the curvature term becomes important, more so at higher learning rates, and even where the clip cannot act.
  \item \emph{Failure keeps time in samples.} Without the clip, the square-root rule preserves the failure point in samples, and lowering the learning rate delays failure linearly, as the drift law predicts, not quadratically, as a noise-driven $1/\eta^2$ law would.
  \item \emph{The clip buys stability rather than sustained speed.} Larger batches are clipped more, so the clip should delay their failure most without removing the progress gap, and widening it should give this protection up.
\end{enumerate}

\section{Experimental Setup}
\label{sec:setup}

We train Qwen2.5-Math-1.5B~\citep{yang2024qwen2} with GRPO in verl~\citep{sheng2025hybridflow} on DAPO-Math-17k~\citep{yu2026dapo}, using $G{=}8$, temperature 1, and responses up to 3072 tokens.
We use AdamW~\citep{loshchilov2017decoupled} with a short linear warm-up followed by a constant learning rate.
To isolate batch-size, learning-rate, and clipping effects, we disable KL penalty, entropy bonus, dynamic sampling, and gradient-norm clipping, and test gradient-norm clipping separately.
Each run has a budget of 98{,}304 responses.
We evaluate every 8{,}192 responses on 512 held-out problems, averaging four samples per problem.
Additional details are in App.~\ref{app:details}.

\paragraph{Batches and learning rates.} We use rollout batches of 512, 1536, 4096, and 8192 responses (S, M, L, and XL in Table~\ref{tab:cells}), spanning a $16\times$ range.
Unless stated otherwise, each rollout is split into $\mu{=}8$ optimizer steps.
Learning rates follow the square-root rule from $\eta_S\in\{10^{-6},5{\times}10^{-6},10^{-5}\}$.
At $10^{-6}$, clipping is almost never active, and we call this regime \emph{conservative}.
At the two larger rates, clipping becomes regularly active, and we call these regimes \emph{active}.

\paragraph{Failure criterion.} A run fails at the first point at which its accuracy falls below half of the best accuracy observed so far.
If this never happens, we use the first sustained gradient-norm excursion: the first rollout whose gradient norm exceeds four times its early median and is followed by at least three of the next eight rollouts above the same threshold.
Runs meeting neither condition within the training budget are right-censored.

\paragraph{Controls.} We compare matched runs with clipping removed or widened, and replay frozen checkpoints on identical data with and without clipping.
We repeat key comparisons on DeepScaleR-Preview~\citep{deepscaler2025}, Qwen2.5-1.5B-Instruct~\citep{qwen2.5}, and Qwen2.5-Math-7B~\citep{yang2024qwen2}.
We use two seeds where available.

\begin{table}[t]
\centering\scriptsize
\setlength{\tabcolsep}{3pt}
\begin{tabular}{lrrrrl}
\toprule
cell & $P$ & $\Br$ & $\mu$ & $b$ & $\eta$ \\
\midrule
S & 64 & 512 & 8 & 64 & $\eta_S$ \\
M & 192 & 1536 & 8 & 192 & $\sqrt3\,\eta_S$ \\
L & 512 & 4096 & 8 & 512 & $\sqrt8\,\eta_S$ \\
XL & 1024 & 8192 & 8 & 1024 & $4\,\eta_S$ \\
\midrule
S / L & 64 / 512 & 512 / 4096 & 1 & 512 / 4096 & $\eta_S$ / $\sqrt8\,\eta_S$ \\
S & 64 & 512 & 4 / 16 & 128 / 32 & $\eta_S$ \\
\bottomrule
\end{tabular}
\caption{Configurations.
$P$ is the number of prompts per rollout, $\Br$ the responses per rollout, $\mu$ the optimizer steps per rollout, and $b$ the responses per optimizer step.
We name runs by their rollout batch, S, M, L or XL, with $\mu{=}8$ unless stated otherwise, and the $\mu{=}4$ and $\mu{=}16$ runs are trained without the clip.}
\label{tab:cells}
\end{table}

\section{Experimental Results}
\label{sec:results}

\subsection{Progress slows above a critical batch (P1)}
\label{sec:progress}\label{sec:active}

\begin{figure}[t]
\centering
\includegraphics[width=\linewidth]{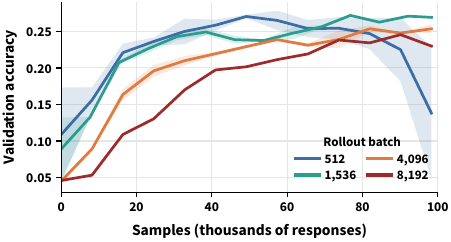}
\caption{Learning curves of four rollout batches under the square-root rule with $\eta_S{=}5{\times}10^{-6}$ and $\mu{=}8$.
Color marks the rollout batch, and bands span two seeds where available.
The 1,536 batch closely tracks the 512 batch, while larger batches require progressively more samples.}
\label{fig:sml}
\end{figure}

P1 predicts that larger batches need more samples once the rollout batch exceeds a critical scale, even when clipping is inactive.
In the active regimes, roughly $10^{-3}$ of tokens are clipped, compared with fewer than $2{\times}10^{-4}$ in the conservative regime.

\paragraph{The gap appears without the clip.} At $\mu{=}1$, where the ratio clip cannot act, Table~\ref{tab:levels} shows that the large batch still needs 20--45\% more samples than the small batch to reach the same accuracy.
The gap also appears at $\mu{=}8$ in the conservative regime and, as Fig.~\ref{fig:clipoff} shows, persists when the clip is removed.
The progress gap therefore does not require the clip.

\paragraph{The penalty appears beyond a critical batch.} Across four rollout batches, increasing the batch from 512 to 1536 costs little, while further increases become progressively less sample-efficient (Fig.~\ref{fig:sml}).
With $\mu{=}8$, this transition occurs at a smaller batch as the learning rate increases.
The sample-cost exponent rises from 0.15 to 0.44 across the three learning-rate regimes, while the critical batch implied by Eq.~\eqref{eq:progress} falls from about 9k to 2k responses (Table~\ref{tab:levels}, Fig.~\ref{fig:levels}, App.~\ref{app:bcrit}).
Below the critical batch the square-root rule is nearly sample-invariant, while above it the sample cost increasingly approaches the step-limited regime.
At the active rates the large batch still ends higher in Table~\ref{tab:levels}, because only the small batch decays or fails late (\S\ref{sec:clip}).

\begin{table}[t]
\centering\scriptsize
\setlength{\tabcolsep}{2.6pt}
\begin{tabular}{llccccc}
\toprule
$\mu$ & $\eta_S$ & $J_S$ & $J_L$ & $\Dauc$ & $N_{.15}$ L/S & $\alpha$ \\
\midrule
1 & $10^{-6}$          & 0.190 & 0.171 & 0.022 & 1.43 & 0.17 \\
1 & $5{\times}10^{-6}$ & 0.272 & 0.255 & 0.020 & 1.23 & 0.13 \\
8 & $10^{-6}$          & 0.251 & 0.240 & 0.016 & 1.40 & 0.15 \\
8 & $5{\times}10^{-6}$ & 0.229/0.048$^{\P}$ & 0.254 & 0.039 & 1.72 & 0.28 \\
8 & $10^{-5}$          & 0.199/0.013$^\ddagger$ & 0.228 & 0.022 & 2.34 & 0.44 \\
\midrule
8$^{\mathrm{D}}$ & $5{\times}10^{-6}$ & 0.305 & 0.347 & 0.029 & 2.04$^{\flat}$ & 0.37 \\
8$^{\mathrm{I}}$ & $2{\times}10^{-6}$ & 0.082 & 0.076 & 0.019 & 6.2$^{\flat}$ & 0.67 \\
8$^{\mathrm{7B}}$ & $2{\times}10^{-6}$ & 0.400$^{\natural}$ & 0.375$^{\natural}$ & 0.057 & 2.00$^{\flat}$ & 0.36 \\
\bottomrule
\end{tabular}
\caption{Small versus large batch.
$J$ is the final accuracy (seed mean), $\Dauc$ the mean distance between the learning curves (seed noise 0.005--0.009), $N_{.15}$ L/S the ratio of the samples needed to reach 15\% accuracy, and $\alpha$ the pooled sample-cost exponent.
Entries marked $^{\P}$ and $^\ddagger$ list both seeds, the second of which fails late.
Rows marked $^{\mathrm{D}}$ use DeepScaleR, $^{\mathrm{I}}$ Qwen2.5-1.5B-Instruct and $^{\mathrm{7B}}$ Qwen2.5-Math-7B with one seed, and values marked $^{\natural}$ are at 49k samples and $^{\flat}$ at a model-specific target.}
\label{tab:levels}
\end{table}

\paragraph{Robustness.} A positive sample-cost gap also appears on DeepScaleR-Preview, Qwen2.5-1.5B-Instruct, and Qwen2.5-Math-7B (Table~\ref{tab:levels}, App.~\ref{app:instruct}).
The direction of the effect holds in every setting.
Above the critical batch, the square-root rule therefore does not preserve sample efficiency.

\subsection{Failure keeps time in samples (P2)}
\label{sec:failure}\label{sec:stability}

\begin{figure*}[t]
\centering
\includegraphics[width=\textwidth]{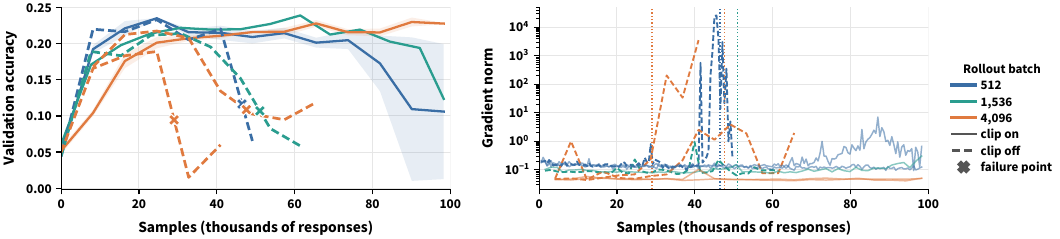}
\caption{Clip on and off for three rollout batches with $\eta_S{=}10^{-5}$ and $\mu{=}8$.
Color marks the rollout batch and line style the clip (solid on, dashed off), with bands over clipped seeds.
Left: accuracy with crosses at failure.
Right: gradient norm with dotted lines at failure.
Unclipped runs fail at similar sample counts despite unequal step counts.}
\label{fig:clipoff}
\vspace{-1em}
\end{figure*}

P2 predicts that without clipping, failure is governed by consumed samples rather than optimizer steps and does not move along the square-root rule.

\paragraph{Failure aligns in samples, not steps.} Removing the clip exposes this clock (Fig.~\ref{fig:clipoff}).
At the gradient-norm onset, the 4,096 batch fails after only 7--10 rollouts, or 56--80 optimizer steps, while the 512 batch lasts 81 rollouts, or 648 steps.
Yet both fail after 30--50k responses.

\paragraph{The square-root rule preserves the failure point.} Across 21 unclipped runs, we vary the optimizer batch by $32\times$ and the learning rate by nearly $6\times$ and fit $N_{\mathrm{fail}}\propto b^{\beta}\eta^{-\gamma}$ (Table~\ref{tab:lawfit}).
Along the square-root rule the failure point does not move with the batch: $\beta-\gamma/2=-0.08$ over the failed runs and $0.11$ with the censored runs included, and both 90\% intervals contain zero, which rules out the batch-blind prediction of $-1/2$.
The fitted $\gamma=0.64$--$0.72$ rules out the random-walk prediction of 2, and of the three candidate forms only the drift form survives a likelihood-ratio test against the free fit ($p\approx0.3$, against $p<10^{-4}$ for the other two).
Fig.~\ref{fig:law} plots each run against the law, and fourteen of the eighteen failing runs lie within a factor of 1.5 of it.

\begin{figure}[t]
\centering
\includegraphics[width=\linewidth]{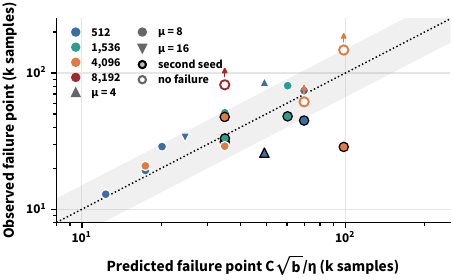}
\caption{Observed failure points of unclipped runs against the failure law, Eq.~\eqref{eq:law}.
Color marks the rollout batch, marker shape $\mu$, and a black edge the second seed, and open markers with arrows are right-censored.
The gray band spans a factor of 1.5 around equality and holds 14 of the 18 failing runs.}
\label{fig:law}
\vspace{-1em}
\end{figure}

\paragraph{The failure pattern favors accumulated drift.} Failure depends on the learning rate far more weakly than the quadratic law of a noise-driven random walk, and at the highest learning rates it coincides with a jump of about four orders of magnitude in the KL divergence between the updated and rollout policies.
The failure point is also not an artifact of disabling gradient-norm clipping: with a norm cap of 1.0, the small batch fails after 36k samples, within the 31--46k range of its seeds.

\paragraph{Robustness.} Failure also keeps time in samples on the 7B model, with a model-specific constant.
On Qwen2.5-Math-7B, the two batches fail $6\times$ apart in rollouts but only $1.3\times$ apart in samples.
Along the square-root rule, larger batches thus fail after the same number of consumed samples despite taking far fewer optimizer steps.

\subsection{The clip buys stability rather than sustained speed (P3)}
\label{sec:clip}\label{sec:mechanism}\label{sec:fix}

\begin{figure}[t]
\centering
\includegraphics[width=\linewidth]{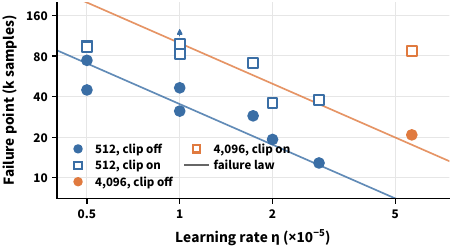}
\caption{Failure point against learning rate for the 512 and 4,096 batches with $\mu{=}8$, with and without the clip.
Color marks the rollout batch, filled circles runs without the clip, open squares runs with it, arrows runs that never failed, and lines the failure law, Eq.~\eqref{eq:law}.
Every clipped run fails later than its unclipped counterpart, and the large batch gains most.}
\label{fig:clipbudget}
\vspace{-1em}
\end{figure}

P3 predicts that the clip, acting on the drift, delays failure, more for the large batch, and that widening it gives up this protection without closing the progress gap.

\paragraph{The clip delays failure, most for the large batch.} With the clip on, the small batch still fails, but later, by a factor of two (Fig.~\ref{fig:clipbudget}).
The large batch, which clips several times more tokens per step, never fails within our training budget at its scaled rates.
Pushed to twice its rate so that it does fail, it lasts four times longer with the clip than without, while the small batch with a similar unclipped failure point gains only a factor of two.
As the model predicts, the stabilizing effect is largest for the batch on which clipping acts most strongly.
A practical side effect: since clipped runs always outlive the unclipped law, Eq.~\eqref{eq:law} is a conservative estimate of when a clipped run can fail.

\vspace{-0.2em}
\paragraph{What the clip removes.} In replays of frozen checkpoints, only about 0.5\% of the large batch's tokens are clipped, yet their gradient has more than half the norm of the full gradient, while the share is much smaller for the small batch (App.~\ref{app:share}, App.~\ref{app:replay}).
Clipping therefore removes a large, batch-dependent part of the update, as the wider ratio spread of the large batch predicts.

\paragraph{Widening the clip trades stability for a transient gain.} Widening the clip reduces the clipped fraction and speeds early learning, but the gain is not sustained: performance later decays and one of two seeds collapses (App.~\ref{app:fix}).
Allowing ratios to move farther therefore gives up much of the large batch's stability without recovering sustained sample efficiency.

Overall, the clip trades a transient speed-up for stability, a trade that favors large batches.

\section{Implications}
\label{sec:implications}\label{sec:guidance}\label{sec:ramp}

\begin{figure}[t]
\centering
\includegraphics[width=\linewidth]{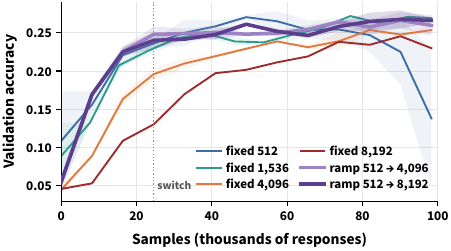}
\caption{Batch ramp with $\eta_S{=}5{\times}10^{-6}$ and $\mu{=}8$.
Each ramp uses batch 512 for 24.6k samples (dotted line), then switches to 4,096 or 8,192 (two seeds each), and thin lines are fixed batches.
Both ramps lie well above the fixed large batches, near the best fixed batch.}
\label{fig:ramp}
\end{figure}

The two clocks give a simple batch-selection rule.
Its one free choice, the switch point, is forgiving over a broad range.

\paragraph{Below the critical batch, use the largest efficient batch.} Since progress changes little below $B^{*}$ while larger batches are no less stable, the model favors the largest batch that remains below the critical regime.
At our learning rates this is 1,536 responses: it matches the small batch's sample efficiency to within seed noise, does not fail, and achieves the best final accuracy among the fixed batches.

\vspace{-0.3em}
\paragraph{Above the critical batch, ramp the batch.} When a larger batch is needed for throughput, train first with a small batch and switch, before its predicted failure point, to a larger and more stable batch.
We reset the optimizer state and repeat the one-rollout warm-up at the switch.
The resulting ramps lie well above the fixed large-batch curves and track the best fixed batch, while generating three quarters of their samples with the large batch (Fig.~\ref{fig:ramp}).
App.~\ref{app:sweep} finds that switches between 25k and 74k samples perform similarly, and the failure law bounds how late the switch can come.

\vspace{-0.3em}
\section{Related Work}
\vspace{-0.3em}
\label{sec:related}

\paragraph{Batch-size scaling and invariance.}
Batch size is a basic scaling variable in optimization, affecting both computational efficiency and training dynamics.
The linear rule for SGD~\citep{smith2018dont}, which also motivates growing the batch during training, and the square-root rule for Adam~\citep{malladi2022sde} keep training trajectories comparable across batch sizes.
\citet{hilton2022batch} make PPO batch-size invariant by decoupling the clip's proximal policy from the behavior policy, given a sufficiently small optimizer batch, while \citet{li2026larger} show that square-root scaling keeps GRPO learning curves aligned with one optimizer step per rollout over a bounded range of batch sizes.
In contrast, we study the multi-step regime, where this batch-size invariance no longer holds and drift preservation does not imply progress preservation.

\paragraph{Critical batch.}
Beyond a critical batch, larger batches stop saving optimizer steps and start costing samples.
With the learning rate re-tuned at every batch, the critical batch of \citet{mccandlish2018empirical} equals the gradient noise scale and does not depend on the learning rate.
Under square-root scaling, by contrast, the learning rate is coupled to batch size, leading to a different critical batch that depends on the learning-rate scale.
In LLM RL, larger batches have been found to reach higher final accuracy~\citep{depthbreadth2025,scalerl} while learning more slowly~\citep{scalerl}.
Our two-clock view connects these observations by separating the batch scale governing progress from the sample scale governing failure.

\paragraph{Training collapse.}
Batch scaling also affects training stability.
\citet{stepsize2025} derive a step-size threshold above which RLVR collapses, while \citet{staleness2026} find that collapse time in asynchronous GRPO scales roughly as $1/\eta$, the fixed-batch case of Eq.~\eqref{eq:law}, and attribute it to directional drift.
We add the batch dimension: along the square-root rule, failure is approximately invariant in consumed samples.

\paragraph{Ratio clipping.}
Training stability in GRPO is closely tied to ratio clipping.
Trust-region methods replace or soften the hard clip~\citep{qi2026rethinking,pspo2026,yang2026clipping,gacpo2026}, while \citet{mugrpo2026} relax it to enable stale-rollout reuse and \citet{scalerl} show that the clip threshold changes RL scaling behavior.
In contrast to work that treats clipping as a bottleneck~\citep{qi2026rethinking,yang2026clipping}, we find that it mainly buys stability, at a transient cost in speed.

\section{Conclusion}
\label{sec:conclusion}

Scaling rules are often interpreted as promises of invariance.
For multi-step GRPO, the square-root rule instead preserves first-order drift per sample, not progress per sample, yielding two clocks: progress slows beyond a learning-rate-dependent critical batch, while a drift-threshold hypothesis predicts an approximately sample-invariant failure point.
Our experiments support these predictions and show that ratio clipping buys stability, especially for larger batches, while the progress gap persists without it.
Together, the two clocks suggest a simple strategy for multi-step RLVR: train below the critical batch for efficient progress, then switch to a larger batch before failure when throughput demands it.

\section*{Limitations}
We broaden our evaluation across multiple checkpoints, two datasets, and a 7B model, but compute and resource constraints still limit our coverage to one model family and one task domain.
Our theory is intentionally local, and direct measurement of the curvature term is computationally expensive at this scale, so we evaluate it through its testable predictions.
Failure points also exhibit intrinsic variation across seeds.
Finally, we measure sample efficiency in responses, while token, compute, and wall-clock efficiency additionally depend on generation lengths, hardware, and system implementation.

\section*{Ethical Considerations}
This work analyzes the training dynamics of RLVR on public math datasets with open-weight models and releases no new model or dataset.
Its main societal relevance is environmental: our batch schedule aims to reduce the number of generated samples a large batch needs to reach a given accuracy, although our own study required many training runs (App.~\ref{app:details} reports the hardware).
We see no additional risks beyond those of LLM post-training in general.

\bibliography{references}

\newpage
\appendix

\section{Theory and Derivations}
\label{app:theory}\label{app:scaling}
This appendix gives the derivations behind \S\ref{sec:method} in full: the progress law (\ref{app:progressproof}), the failure law (\ref{app:failproof}), the ratio spread (\ref{app:ratio}), the share of the clipped tokens (\ref{app:share}), Adam's per-coordinate normalization and larger batches (\ref{app:adam}), and the critical batches our measurements imply (\ref{app:bcrit}).
Throughout, $b=\Br/\mu$ is the optimizer batch, $\hat g=\bar g+\xi$ the gradient estimate from $b$ responses with $\mathbb{E}\,\xi=0$ and $\mathrm{Cov}\,\xi=\Sigma/b$, $\sigma^2=\mathrm{tr}\,\Sigma$, and $H$ the local Hessian.

\subsection{Progress per sample}
\label{app:progressproof}
\paragraph{The normalized step.}
The scalar normalizer is $\nu^2=\mathbb{E}\|\hat g\|^2$.
Since $\xi$ has mean zero,
\begin{equation}
\nu^2=\|\bar g\|^2+\frac{\sigma^2}{b}=\frac{\sigma^2}{b}\Big(1+\frac{b}{\Bsimple}\Big),
\label{eq:nu}
\end{equation}
with $\Bsimple=\sigma^2/\|\bar g\|^2$.
Under (A1), $b\ll\Bsimple$, so $\nu\approx\sigma/\sqrt b$ to leading order in $b/\Bsimple$, and the step $\Delta\theta=-\eta\,\hat g/\nu$ becomes
\begin{equation}
\Delta\theta\approx-\frac{\eta\sqrt b}{\sigma}\,(\bar g+\xi).
\label{eq:stepapp}
\end{equation}

\paragraph{Expected loss change.}
Under (A2) the loss change over one step is
\begin{equation}
\Delta L=\langle\bar g,\Delta\theta\rangle+\tfrac12\,\Delta\theta^{\top}H\Delta\theta.
\label{eq:quad}
\end{equation}
Substituting Eq.~\eqref{eq:stepapp} and taking the expectation over $\xi$ term by term:
the linear term gives
\begin{equation}
\mathbb{E}\,\langle\bar g,\Delta\theta\rangle=-\frac{\eta\sqrt b}{\sigma}\,\|\bar g\|^2,
\label{eq:lin}
\end{equation}
because $\mathbb{E}\,\xi=0$.
The quadratic term gives
\begin{align}
\mathbb{E}\,\Delta\theta^{\top}H\Delta\theta&=\frac{\eta^2 b}{\sigma^2}\Big(\bar g^{\top}H\bar g+\mathbb{E}\,\xi^{\top}H\xi\Big)\notag\\
&=\frac{\eta^2}{\sigma^2}\Big(b\,\bar g^{\top}H\bar g+\mathrm{tr}(H\Sigma)\Big),
\label{eq:quadexp}
\end{align}
because the cross terms vanish and $\mathbb{E}\,\xi^{\top}H\xi=\mathrm{tr}(H\,\mathrm{Cov}\,\xi)=\mathrm{tr}(H\Sigma)/b$.
Together,
\begin{equation}
\mathbb{E}\,\Delta L=-\frac{\eta\sqrt b}{\sigma}\,\|\bar g\|^2+\frac{\eta^2}{2\sigma^2}\Big(b\,\bar g^{\top}H\bar g+\mathrm{tr}(H\Sigma)\Big).
\label{eq:expand}
\end{equation}

\paragraph{Proof of Proposition~\ref{prop:alpha}.}
One step consumes $b$ responses, so the expected loss decrease per consumed response is $r=-\mathbb{E}\,\Delta L/b$.
Dividing Eq.~\eqref{eq:expand} by $b$ gives
\begin{equation}
r=\frac{\eta}{\sqrt b}\,\frac{\|\bar g\|^2}{\sigma}-\frac{\eta^2}{b}\,\frac{b\,\bar g^{\top}H\bar g+\mathrm{tr}(H\Sigma)}{2\sigma^2}.
\label{eq:rateraw}
\end{equation}
With $\eta=\kappa\sqrt b$ we have $\eta/\sqrt b=\kappa$ and $\eta^2/b=\kappa^2$, which yields Eq.~\eqref{eq:rate} with
\begin{equation}
r_0=\frac{\kappa\,\|\bar g\|^2}{\sigma}-\frac{\kappa^2\,\mathrm{tr}(H\Sigma)}{2\sigma^2}.
\label{eq:r0}
\end{equation}
The first two terms do not depend on $b$, and only the curvature of the mean step, $b\,\bar g^{\top}H\bar g$, grows with the batch. $\square$

\paragraph{From the rate to the critical batch.}
Write $c=C_{\mathrm{curv}}$ for the slope in Eq.~\eqref{eq:rate}.
If $r$ is roughly constant over the stretch of training in which the loss falls by $\Delta L_\tau$, the samples needed are
\begin{equation}
N_\tau=\frac{\Delta L_\tau}{r_0-c\,b}=\frac{\Delta L_\tau}{r_0}\Big(1+\frac{c\,b}{r_0}+O\big((c\,b/r_0)^2\big)\Big).
\label{eq:ntauexp}
\end{equation}
Keeping the first-order term and writing $b=\Br/\mu$ gives Eq.~\eqref{eq:progress} with $N_\tau^{\min}=\Delta L_\tau/r_0$ and
\begin{equation}
B^{*}=\mu\,\frac{r_0}{c}=\mu\Big(\frac{2\sigma\|\bar g\|^2}{\kappa\,\bar g^{\top}H\bar g}-\frac{\mathrm{tr}(H\Sigma)}{\bar g^{\top}H\bar g}\Big),
\label{eq:bstarapp}
\end{equation}
which is Eq.~\eqref{eq:bstar}.
The expansion requires $r_0>0$, that is $\kappa<2\sigma\|\bar g\|^2/\mathrm{tr}(H\Sigma)$, and $c\,b\ll r_0$, that is $\Br\ll B^{*}$.
The local sample-cost exponent of Definition~\ref{def:cost} follows by differentiating the logarithm of Eq.~\eqref{eq:progress}:
\begin{equation}
\alpha=\frac{\mathrm{d}\log N_\tau}{\mathrm{d}\log\Br}=\frac{\Br/B^{*}}{1+\Br/B^{*}}=\frac{\Br}{\Br+B^{*}}.
\label{eq:alphalocal}
\end{equation}

\paragraph{Contrast with a learning rate tuned at every batch.}
For SGD with the learning rate $\epsilon$ re-tuned at every batch~\citep{mccandlish2018empirical}, the same expansion gives $\mathbb{E}\,\Delta L=-\epsilon\|\bar g\|^2+\tfrac{\epsilon^2}{2}\big(\bar g^{\top}H\bar g+\mathrm{tr}(H\Sigma)/b\big)$.
Minimizing over $\epsilon$ gives the best decrease per step,
\begin{equation}
-\mathbb{E}\,\Delta L^{*}=\frac{\|\bar g\|^4}{2\,\bar g^{\top}H\bar g}\cdot\frac{1}{1+B_{\mathrm{noise}}/b},
\label{eq:tuned}
\end{equation}
with $B_{\mathrm{noise}}=\mathrm{tr}(H\Sigma)/\bar g^{\top}H\bar g$, the noise scale weighted by curvature.
The samples per unit of progress are then proportional to $b\,(1+B_{\mathrm{noise}}/b)=b+B_{\mathrm{noise}}$, so the critical batch equals $B_{\mathrm{noise}}$ and does not depend on the learning rate.
Under the square-root rule the learning rate is not tuned but assigned, and Eq.~\eqref{eq:bstarapp} equals $\mu$ times $2\sigma\|\bar g\|^2/(\kappa\,\bar g^{\top}H\bar g)-B_{\mathrm{noise}}$: the first term falls as $1/\kappa$, which is why $B^{*}$ moves with the learning rate.

\paragraph{Noise per sample.}
The noise part of one step in Eq.~\eqref{eq:stepapp} is $-(\eta\sqrt b/\sigma)\,\xi$, with covariance
\begin{equation}
\frac{\eta^2 b}{\sigma^2}\cdot\frac{\Sigma}{b}=\frac{\eta^2}{\sigma^2}\,\Sigma=\kappa^2 b\,\frac{\Sigma}{\sigma^2}.
\label{eq:noisecov}
\end{equation}
Fresh mini-batches give independent noise, so over the $N/b$ steps of a run consuming $N$ responses the covariances add to $(N/b)\,\kappa^2 b\,\Sigma/\sigma^2=N\kappa^2\Sigma/\sigma^2$.
The noise covariance accumulated per sample, $\kappa^2\Sigma/\sigma^2$, is therefore independent of the batch, as is the drift per sample.

\subsection{Failure law}
\label{app:failproof}
\paragraph{Proof of Proposition~\ref{rem:fail}.}
By Eq.~\eqref{eq:stepapp}, the mean of step $j$ is $\mathbb{E}\,\Delta\theta_j=-(\eta\sqrt b/\sigma)\,\bar g_j$, where $\bar g_j$ is the mean gradient at step $j$.
Over the stretch assumed in (A3) the direction of $\bar g_j$ changes slowly, so the means add almost collinearly, and after $k$ steps
\begin{equation}
\Big\|\sum_{j<k}\mathbb{E}\,\Delta\theta_j\Big\|\approx k\,\frac{\eta\sqrt b}{\sigma}\,\|\bar g\|.
\label{eq:driftk}
\end{equation}
The run fails when this reaches $D$, after
\begin{equation}
k_{\mathrm{fail}}=\frac{D\,\sigma}{\eta\sqrt b\,\|\bar g\|}
\label{eq:kfail}
\end{equation}
steps.
Each step consumes $b$ responses, so
\begin{equation}
N_{\mathrm{fail}}=k_{\mathrm{fail}}\,b=\frac{D\,\sigma}{\|\bar g\|}\cdot\frac{\sqrt b}{\eta}=C\,\frac{\sqrt b}{\eta}.
\label{eq:nfailapp}
\end{equation}
Under the square-root rule $\eta=\kappa\sqrt b$, so $N_{\mathrm{fail}}=C/\kappa$. $\square$

\paragraph{Alternative mechanisms.}
If failure were instead driven by the noise part of the steps, the relevant displacement would be the sum of independent noise terms.
By Eq.~\eqref{eq:noisecov} each has total variance $\mathrm{tr}(\eta^2\Sigma/\sigma^2)=\eta^2$, so after $k$ steps the root-mean-square displacement is $\sqrt k\,\eta$.
It reaches $D$ after $k\approx D^2/\eta^2$ steps, or
\begin{equation}
N_{\mathrm{fail}}^{\mathrm{noise}}\approx\frac{b\,D^2}{\eta^2}
\label{eq:nfailnoise}
\end{equation}
responses.
A batch-blind mechanism, in which failure depends only on the learning rate, gives $N_{\mathrm{fail}}\propto1/\eta$.
Writing each as $N_{\mathrm{fail}}\propto b^{\beta}\eta^{-\gamma}$, the drift law has $(\beta,\gamma)=(\tfrac12,1)$, the noise law $(1,2)$ and the batch-blind law $(0,1)$.
Along the square-root rule $\eta\propto\sqrt b$, so $N_{\mathrm{fail}}\propto b^{\beta-\gamma/2}$: the drift and noise laws both predict $\beta-\gamma/2=0$, while the batch-blind law predicts $-\tfrac12$.
Invariance along the rule therefore separates the drift law from the batch-blind law, and the learning-rate exponent $\gamma$ separates it from the noise law, which is how the fits in Table~\ref{tab:lawfit} tell the three apart.

\begin{corollary}[Two clocks]\label{prop:clocks}
Under (A1)--(A3) and the square-root rule at fixed $\mu$, the failure point of Eq.~\eqref{eq:law} does not depend on $\Br$, while the samples to a target, $N_\tau=\Delta L_\tau/r(\Br/\mu)$, increase with $\Br$.
\end{corollary}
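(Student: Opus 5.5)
The corollary combines Proposition~\ref{prop:alpha} and Proposition~\ref{rem:fail}, so the plan is to substitute the square-root rule into each result and read off how it depends on $\Br$ at fixed $\mu$.

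\textbf{The failure clock.} I would start from Eq.~\eqref{eq:law}, $N_{\mathrm{fail}}=C\sqrt b/\eta$ with $C=D\sigma/\|\bar g\|$. Substituting $\eta=\kappa\sqrt b$ gives Eq.~\eqref{eq:lawrule}, $N_{\mathrm{fail}}=C/\kappa$. At fixed $\mu$ the rule holds $\kappa$ fixed across $\Br$. The constants $D$, $\sigma=\sqrt{\mathrm{tr}\,\Sigma}$ and $\|\bar g\|$ are properties of the loss landscape and the per-response gradient distribution, not of the batch. Hence $N_{\mathrm{fail}}$ does not depend on $\Br$. The one point to state explicitly is that the stretch in (A3) is the same for every batch. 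This holds because, by Eq.~\eqref{eq:driftk}, the accumulated drift after $N$ responses is $N\kappa\|\bar g\|/\sigma$ independently of $b$. So runs at different batches traverse the same stretch of parameter space per sample, to first order, and see the same $\bar g_j$ at equal $N$.

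\textbf{The progress clock.} I would use Eq.~\eqref{eq:rate}: $r(b)=r_0-C_{\mathrm{curv}}b$, where $r_0$ and $C_{\mathrm{curv}}=\kappa^2\bar g^{\top}H\bar g/(2\sigma^2)$ are independent of $b$ once $\kappa$ is fixed. With $b=\Br/\mu$, the map $\Br\mapsto r(\Br/\mu)$ is affine with slope $-C_{\mathrm{curv}}/\mu$. As long as the regime is valid, $r>0$ so that the target is reachable, and $\Delta L_\tau>0$. Then $N_\tau=\Delta L_\tau/r(\Br/\mu)$ is increasing in $\Br$ whenever $C_{\mathrm{curv}}>0$, that is, whenever $\bar g^{\top}H\bar g>0$. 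I would state this positive-curvature condition along the mean gradient as the operative hypothesis, because it is implicit in (A2) describing a locally convex descent direction. If $\bar g^{\top}H\bar g=0$, the monotonicity is weak and $N_\tau$ is constant. Optionally I would add the first-order form Eq.~\eqref{eq:progress} and the exponent $\alpha=\Br/(\Br+B^*)>0$ from Eq.~\eqref{eq:alphalocal} as a quantitative version.

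\textbf{Main obstacle.} The hard step is not algebra but the interpretation of ``increase'' and of the constants. Two things must be justified. First, $r$ is treated as roughly constant over the stretch where the loss falls by $\Delta L_\tau$. Second, $\bar g$, $H$ and $\Sigma$ are evaluated along a common trajectory for all batches. For the first, I would argue pointwise: at every $N$, the instantaneous rate $r$ at the larger batch is no larger than at the smaller one, with the local quantities taken as equal. Integrating, the cumulative decrease at the larger batch is then dominated by the smaller batch's, so $N_\tau$ is weakly larger. This rests on the common-trajectory approximation justified above by drift per sample being $b$-independent, together with the noise covariance per sample being $b$-independent, Eq.~\eqref{eq:noisecov}. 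For the statement itself, I would simply adopt the constant-rate assumption already used for Eq.~\eqref{eq:ntauexp}, which makes the result immediate.
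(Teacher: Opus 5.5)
Your proposal is correct and follows the paper's own two-line proof. Substituting $\eta=\kappa\sqrt b$ into Eq.~\eqref{eq:law} gives $N_{\mathrm{fail}}=C/\kappa$ with $\kappa$ fixed by the rule, and since $r(b)=r_0-C_{\mathrm{curv}}b$ decreases in $b=\Br/\mu$ whenever $\bar g^{\top}H\bar g>0$, $N_\tau=\Delta L_\tau/r$ increases. Your extra points ($r>0$, $\Delta L_\tau>0$, the common-trajectory reading) are sensible refinements, but positive curvature along $\bar g$ is an added hypothesis, as the paper's ``whenever'' makes explicit, and is not implied by (A2).
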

\begin{proof}
Eq.~\eqref{eq:nfailapp} gives $N_{\mathrm{fail}}=C/\kappa$, and $\kappa$ is fixed by the rule.
By Eq.~\eqref{eq:rate}, $r$ decreases linearly in $b=\Br/\mu$ whenever $\bar g^{\top}H\bar g>0$, so $N_\tau=\Delta L_\tau/r$ increases with $\Br$.
\end{proof}

\paragraph{On (A3).} The threshold assumption is the model's one empirical hypothesis, and the failure-law fit of \S\ref{sec:failure} tests the exponents it predicts.
The threshold concerns movement along the slowly varying mean-gradient direction, where drift accumulates linearly in the samples consumed, whereas noise, spread over many directions, contributes only a random-walk component.
It predicts $\gamma=1$ and $\beta=\tfrac12$, the noise-driven alternative $\gamma=2$ and $\beta=1$, and a batch-blind drift $\gamma=1$ and $\beta=0$.
The fit excludes the second, since Table~\ref{tab:lawfit} gives $\gamma\approx0.7$, and pins the combination $\beta-\gamma/2$ that the scaling rule probes near zero, which (A3) predicts and the third alternative does not.

\paragraph{Clip-active steps.} For the same $N$ samples a run takes $N/b$ optimizer steps, of which the fraction $(\mu-1)/\mu$ with $k\ge1$ can clip: 1344 for S against 168 for L at 98k.
If failure required accumulating some number of clip-active steps, the small batch would reach it after eight times fewer samples.
The unclipped runs rule this out: at the gradient-norm onset, L fails after 7 rollouts, or 56 optimizer steps (29k samples), and S after 81 rollouts, or 648 optimizer steps (41k), a $1.4\times$ difference in samples against $12\times$ in steps.
The underlying instability counts samples, as (A3) assumes, and the different clipped failure points must come from the clip's protection differing between the batches (Proposition~\ref{prop:share}).

\subsection{Ratio spread}
\label{app:ratio}
\paragraph{The log-ratio as accumulated movement.}
Let $\theta_0$ be the policy that generated a rollout and $\theta_k$ the policy after $k$ optimizer steps on it.
To first order in the steps, $\log\pi_{\theta_k}(y_t)-\log\pi_{\theta_0}(y_t)\approx\langle\theta_k-\theta_0,s_t\rangle$, with the score $s_t=\nabla_\theta\log\pi_{\theta_0}(y_t)$, so
\begin{equation}
\log\rho_t(k)\approx\sum_{j<k}\langle\Delta\theta_j,s_t\rangle.
\label{eq:logratioapp}
\end{equation}
Define the score spread $\sigma_s=\sup_{\|v\|=1}\mathrm{std}_t\,\langle v,s_t\rangle$, the largest standard deviation over tokens of the score projected onto a unit direction.

\begin{lemma}[Ratio spread]\label{lem:drift}
To first order, the spread of $\log\rho$ over tokens after $k$ steps is at most $k\,\eta\,\sigma_s$, and this bound is reached when the steps are aligned.
When the steps are uncorrelated, the spread is of order $\sqrt k\,\eta\,\sigma_s$ in root-mean-square.
In either case it is proportional to $\eta$, so under the square-root rule at matched $k$ the large batch's spread is $\sqrt{\Br^{L}/\Br^{S}}$ times the small batch's.
\end{lemma}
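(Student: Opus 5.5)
We start from the first-order expansion Eq.~\eqref{eq:logratioapp}, which writes $\log\rho_t(k)=\langle\theta_k-\theta_0,s_t\rangle$ with $\theta_k-\theta_0=\sum_{j<k}\Delta\theta_j$. For a fixed parameter displacement $v$, the spread over tokens of $\langle v,s_t\rangle$ is $\|v\|\,\mathrm{std}_t\langle v/\|v\|,s_t\rangle\le\|v\|\,\sigma_s$, by the definition of $\sigma_s$ as a supremum over unit directions. So the whole argument reduces to bounding $\|\theta_k-\theta_0\|$. We use the normalized step: under the scalar normalizer of Eq.~\eqref{eq:normaliser}, $\|\Delta\theta_j\|=\eta\|\hat g_j\|/\nu$, and since $\nu^2=\mathbb{E}\|\hat g\|^2$ we have $\|\Delta\theta_j\|\approx\eta$ in root-mean-square. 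We therefore treat each step as having length $\eta$ to leading order, the same approximation used in the noise-per-sample computation of App.~\ref{app:failproof}.

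\emph{Aligned case.} By the triangle inequality, $\|\sum_{j<k}\Delta\theta_j\|\le\sum_{j<k}\|\Delta\theta_j\|\approx k\eta$, so the spread is at most $k\eta\sigma_s$. Equality requires two things. First, all $\Delta\theta_j$ point along a common unit vector $v$. Second, $v$ attains the supremum defining $\sigma_s$, or approaches it if the sup is not attained. In that case the spread is exactly $k\eta\,\mathrm{std}_t\langle v,s_t\rangle=k\eta\sigma_s$.

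\emph{Uncorrelated case.} Suppose the steps have zero cross-correlation, $\mathbb{E}\langle\Delta\theta_i,\Delta\theta_j\rangle=0$ for $i\ne j$. Then $\mathbb{E}\|\sum_j\Delta\theta_j\|^2=\sum_j\mathbb{E}\|\Delta\theta_j\|^2\approx k\eta^2$. The RMS displacement is therefore $\sqrt k\,\eta$, and by the same projection bound the RMS spread is of order $\sqrt k\,\eta\,\sigma_s$. Here ``of order'' absorbs the fact that the displacement direction need not attain $\sigma_s$, as the statement allows.

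\emph{Scaling along the rule.} In both regimes the spread has the form $f(k)\,\eta\,\sigma_s$, where $f(k)$ is $k$ or $\sqrt k$ and depends only on the step count. The score spread $\sigma_s$ is a property of the policy's token scores, not of the batch. At matched $k$, the ratio of the large batch's spread to the small batch's is therefore $\eta^L/\eta^S$. Under Definition~\ref{def:rule} this equals $\sqrt{b^L/b^S}=\sqrt{\Br^L/\Br^S}$, since $\mu$ is fixed.

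The main obstacle is justifying step length $\approx\eta$ uniformly across batches, since the normalizer fixes only the \emph{mean-square} step. We handle this with the argument already used in App.~\ref{app:failproof}: the expected squared step $\eta^2\mathbb{E}\|\hat g\|^2/\nu^2$ equals $\eta^2$ exactly, for any $b$. The bounds therefore hold in RMS without invoking (A1). A secondary point is that the statement compares regimes rather than general correlation structures. We state it as these two limiting cases and note that any intermediate correlation gives a spread between $\sqrt k\,\eta\sigma_s$ and $k\eta\sigma_s$, still proportional to $\eta$, so the scaling conclusion is unaffected.
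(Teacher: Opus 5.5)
Your two bounds are correct. Bounding the displacement $\|\sum_{j<k}\Delta\theta_j\|$ and projecting once onto the scores is a minor variant of the paper's argument, which applies subadditivity of the token-wise standard deviation term by term. Your ``uncorrelated'' means orthogonal steps in expectation rather than per-step terms uncorrelated across tokens, but both routes reach $k\,\eta\,\sigma_s$ and $\sqrt k\,\eta\,\sigma_s$. Your remark that $\mathbb{E}\|\Delta\theta_j\|^2=\eta^2$ holds exactly under the scalar normalizer is also right.

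The gap is in the last claim. In both regimes you have only upper bounds (``of order'' in the uncorrelated case). That the \emph{bounds} scale with $\eta$ says nothing about the ratio of the \emph{actual} spreads. The actual spread is $\eta\,\mathrm{std}_t\langle U_k,s_t\rangle$ with $U_k=\sum_{j<k}u_j$ and $u_j=\hat g_j/\nu$. The factor $\sqrt{\Br^{L}/\Br^{S}}$ therefore requires the dimensionless factor $\mathrm{std}_t\langle U_k,s_t\rangle$ to be the same at both batches, i.e.\ the normalized steps must have batch-independent statistics. Your sentence that $f(k)$ ``depends only on the step count'' assumes this. Your closing argument, that any intermediate correlation still gives something proportional to $\eta$, fails because the constant of proportionality may depend on $b$. (Nor is $\sqrt k\,\eta\,\sigma_s$ a lower bound: anti-correlated steps or low-spread directions give less.)

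So the scaling claim does not hold ``without invoking (A1)''. With independent mini-batch noise and $i\neq j$, $\mathbb{E}\langle u_i,u_j\rangle=\|\bar g\|^2/\nu^2=b/(b+\Bsimple)$. Hence $\mathbb{E}\|U_k\|^2=k+k(k-1)\,b/(b+\Bsimple)$ depends on $b$. Outside (A1) the larger batch's steps are more aligned, and the mean direction $\bar g$ need not carry the same score spread as the noise directions, so the two batches' dimensionless factors differ.

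The missing step is the one the paper's proof supplies. Under (A1), $u_j\approx\sqrt b\,(\bar g+\xi_j)/\sigma$. Its noise part $\sqrt b\,\xi_j/\sigma$ has covariance $\Sigma/\sigma^2$, independent of $b$, and its mean part has norm $\sqrt{b/\Bsimple}\ll1$. The $u_j$ then have the same statistics at every batch. For instance, to leading order in $b/\Bsimple$, $\mathbb{E}\,\mathrm{var}_t\langle U_k,s_t\rangle=k\,\mathrm{tr}(S\Sigma)/\sigma^2$, where $S$ is the covariance of the scores over tokens; this is batch-free. So the spread scales exactly with $\eta$, and the square-root rule at fixed $\mu$ gives the ratio. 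Keep your exact mean-square observation for the bounds, but you need (A1) for the scaling.
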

\begin{proof}
Under (A1) the normalized update $u=\sqrt b\,(\bar g+\xi)/\sigma$ has $\mathbb{E}\|u\|^2=1+b/\Bsimple\approx1$, so each step has norm close to $\eta$.
Each term of Eq.~\eqref{eq:logratioapp} therefore has spread over tokens $\mathrm{std}_t\langle\Delta\theta_j,s_t\rangle\le\|\Delta\theta_j\|\,\sigma_s\approx\eta\,\sigma_s$.
The standard deviation of a sum is at most the sum of the standard deviations, which gives $k\,\eta\,\sigma_s$, with equality when the $k$ terms are perfectly correlated across tokens.
If the terms are uncorrelated, their variances add, giving $\sqrt k\,\eta\,\sigma_s$ in root-mean-square.
For the batch dependence, note that under (A1) the noise part of $u$, $\sqrt b\,\xi/\sigma$, has covariance $\Sigma/\sigma^2$, independent of $b$, so the step directions have the same statistics at every batch and the spread scales exactly with $\eta$.
At fixed $\mu$ the square-root rule sets $\eta_L/\eta_S=\sqrt{\Br^{L}/\Br^{S}}$.
\end{proof}

\paragraph{Measured ratio spread.} The replay measures the spread of Lemma~\ref{lem:drift} at matched $k$ on the final active checkpoints used in \S\ref{sec:clip}: the 99th percentile of $|\log\rho|$ is $1.05$, $1.9$, $3.1$, $2.7$, $2.7$, $2.8$, $2.8$ times larger for L than for S at $k=1,\dots,7$.
From $k{=}3$ on the ratio is the predicted $\sqrt8\approx2.8$.
At $k\le2$ both batches sit near the $0.09$--$0.10$ floor that the conservative runs show at every $k$.
Fig.~\ref{fig:perk} shows that the clipped \emph{fraction} rises with $k$ for both batches, and its ratio between the batches grows from 1.1--1.6 at $k{=}1$ to 2--4 from $k{=}3$ on, around the spread ratio.

\subsection{Share of the clipped tokens}
\label{app:share}
Our replay accumulates the squared norms of the clipped and unclipped gradients and their inner product over parameter shards, without storing either gradient.
The following identity turns these three numbers into the share of the update carried by the clipped tokens.

\begin{proposition}[Share of the clipped tokens]\label{prop:share}
Let $g$ and $g^{\mathrm{clip}}$ be the gradients of the unclipped and clipped objectives on the same mini-batch, $q=\|g^{\mathrm{clip}}\|/\|g\|$ and $\dclip=1-\cos(g^{\mathrm{clip}},g)$.
The clipped tokens contribute $g_C=g-g^{\mathrm{clip}}$, and
\begin{align}
\frac{\|g_C\|}{\|g\|}&=\sqrt{1+q^2-2q(1-\dclip)}\notag\\
&\ge\sqrt{\dclip(2-\dclip)}.
\label{eq:share}
\end{align}
\end{proposition}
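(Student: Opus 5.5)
The identity is pure Euclidean geometry in parameter space, so I would prove it by expanding a squared norm and then minimizing over the one free parameter $q$. First I need the decomposition $g_C=g-g^{\mathrm{clip}}$. The objective is an average over tokens. A clipped token contributes zero gradient to the clipped objective, since its term is constant in $\theta$ in the direction favored by its advantage. An unclipped token contributes identically to both objectives. Hence $g^{\mathrm{clip}}$ is the sum over unclipped tokens, $g$ is the sum over all tokens, and their difference is exactly the clipped tokens' contribution $g_C$. (Here $g$ is assumed nonzero.) This step only restates the definition of clipping from \S\ref{sec:prelim}, so I would state it briefly.

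Next I would expand
\[
\|g_C\|^2=\|g\|^2+\|g^{\mathrm{clip}}\|^2-2\langle g,g^{\mathrm{clip}}\rangle .
\]
I would then write $\langle g,g^{\mathrm{clip}}\rangle=\|g\|\,\|g^{\mathrm{clip}}\|\cos(g^{\mathrm{clip}},g)=q\|g\|^2(1-\dclip)$. Dividing by $\|g\|^2$ and taking the square root gives the equality in Eq.~\eqref{eq:share}. If $g^{\mathrm{clip}}=0$, the cosine is undefined, but the product $q(1-\dclip)$ vanishes and the formula still reads $\|g_C\|/\|g\|=1$. I would note this convention.

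For the lower bound, I would treat $f(q)=1+q^2-2q(1-\dclip)$ as a quadratic in $q\ge0$ with $\dclip$ fixed. It is minimized at $q=1-\dclip$, where
\[
f=1-(1-\dclip)^2=\dclip(2-\dclip).
\]
When $\dclip>1$ the cosine is negative, the unconstrained minimizer lies at $q<0$, and over $q\ge0$ the minimum is $f(0)=1$. This still dominates $\dclip(2-\dclip)$, because $1-(1-\dclip)^2\le1$. So the bound holds for all $\dclip\in[0,2]$, and taking square roots finishes.

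The only point needing care is this case distinction for $\dclip>1$ and the degenerate $g^{\mathrm{clip}}=0$ case; the rest is routine algebra. Geometrically, the bound says that the distance from $g$ to any vector at angle $\arccos(1-\dclip)$ is at least $\|g\|\sin\angle$, and $\sin\angle=\sqrt{\dclip(2-\dclip)}$. I would mention this as an intuition check.
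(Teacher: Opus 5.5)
Your proposal is correct and follows the paper's proof essentially step for step. Both use the decomposition $g=g^{\mathrm{clip}}+g_C$, the law of cosines for the equality, and minimization of $f(q)=1+q^2-2q(1-\dclip)$ over $q\ge0$ with the same case split at $\dclip=1$; your remarks on the degenerate case $g^{\mathrm{clip}}=0$ and the geometric reading $\sqrt{\dclip(2-\dclip)}=\sin\angle(g^{\mathrm{clip}},g)$ are valid additions that the paper omits.
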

\begin{proof}
The clipped objective's gradient omits exactly the contributions of the clipped tokens, so $g=g^{\mathrm{clip}}+g_C$.
By the law of cosines,
\begin{align}
\|g_C\|^2&=\|g\|^2+\|g^{\mathrm{clip}}\|^2\notag\\
&\quad-2\,\|g\|\,\|g^{\mathrm{clip}}\|\cos(g^{\mathrm{clip}},g),
\label{eq:cosines}
\end{align}
and dividing by $\|g\|^2$ gives the equality.
For the bound, minimize $f(q)=1+q^2-2q(1-\dclip)$ over $q\ge0$.
If $\dclip\le1$, the minimum is at $q=1-\dclip$ with value $1-(1-\dclip)^2=\dclip(2-\dclip)$.
If $\dclip>1$, the minimum is at $q=0$ with value $1\ge\dclip(2-\dclip)$.
The bound uses the angle alone and holds whatever the norm ratio.
\end{proof}

\paragraph{Measured share.} Both $q$ and $\dclip$ are measured in every replay, so Eq.~\eqref{eq:share} is exact per step.
In the active regime the gradient of the tokens the clip removes, $0.5\%$ of the mini-batch for L, has $50$--$70\%$ of the norm of the full gradient at every $k\ge2$ (mean $0.50$--$0.55$ over $k$, and up to $0.9$--$1.7$ at the earliest checkpoint, step~2, where the clipped part is larger than the full gradient).
For S, the $0.05\%$ of clipped tokens give $10$--$20\%$ (mean $0.13$--$0.15$), rising to $0.42$ with a step at $1.0$ in the pre-collapse checkpoint of seed~1.
In the conservative regime the share is $\le0.14$ at every step, the level set by the few clipped tokens' sampling noise.
Clipping in the active regime therefore does not trim a tail: it removes a dominant component of the update, and a component that is dominant in the large batch at every step of the rollout.

\subsection{Per-coordinate normalization and larger batches}
\label{app:adam}
\paragraph{Per-coordinate normalization.}
Adam divides coordinate $i$ by $\sqrt{v_i}$, with $v_i=\mathbb{E}\,\hat g_i^2=\bar g_i^2+\Sigma_{ii}/b$.
If (A1) holds coordinate-wise, $b\ll\Sigma_{ii}/\bar g_i^2$, then $v_i\approx\Sigma_{ii}/b$ and
\begin{equation}
\Delta\theta\approx-\eta\sqrt b\,P(\bar g+\xi),\qquad P=\mathrm{diag}(\Sigma_{ii})^{-1/2}.
\label{eq:adamstep}
\end{equation}
Repeating the calculation of \ref{app:progressproof} with this step gives
\begin{align}
r(b)&=\kappa\,\bar g^{\top}P\bar g-\frac{\kappa^2}{2}\,\mathrm{tr}(PHP\Sigma)\notag\\
&\quad-b\,\frac{\kappa^2}{2}\,\bar g^{\top}PHP\bar g,
\label{eq:rateadam}
\end{align}
which has the structure of Eq.~\eqref{eq:rate}, with $\bar g^{\top}P\bar g$ in place of $\|\bar g\|^2/\sigma$, $\mathrm{tr}(PHP\Sigma)$ in place of $\mathrm{tr}(H\Sigma)/\sigma^2$ and $\bar g^{\top}PHP\bar g$ in place of $\bar g^{\top}H\bar g/\sigma^2$.
The drift per sample is $\kappa\|P\bar g\|$, again independent of $b$, and the failure law keeps the form of Eq.~\eqref{eq:law} with $C=D/\|P\bar g\|$.
Momentum averages consecutive steps, which changes the constants but not the dependence on $b$, as long as the gradient changes slowly over the averaging window.

\begin{remark}[Outside (A1)]\label{rem:outside}
By Eq.~\eqref{eq:nu}, the first-order progress per sample is
\begin{equation}
\frac{\eta\,\|\bar g\|^2}{b\,\nu}=\frac{\kappa\,\|\bar g\|^2}{\sigma}\Big(1+\frac{b}{\Bsimple}\Big)^{-1/2}.
\label{eq:outside}
\end{equation}
For $b\ll\Bsimple$ this is the batch-independent first term of Eq.~\eqref{eq:rate}.
When $b$ approaches $\Bsimple$ it falls as $1/\sqrt b$, so even without curvature the sample-cost exponent rises toward $\alpha=\tfrac12$, and once the curvature term dominates, toward $\alpha=1$.
Our optimizer batches are 60 to 2000 times below the measured $\Bsimple$, so this regime does not explain our gaps.
\end{remark}

\subsection{Implied critical batch}
\label{app:bcrit}
\begin{remark}[Implied critical batch]
\label{rem:bcrit}
For a family with a small batch $B_S$ and a large batch $8B_S$, let $R=N_\tau(8B_S)/N_\tau(B_S)=8^{\alpha}$ be the measured ratio of samples to the same target.
Under Eq.~\eqref{eq:progress},
\begin{equation}
R=\frac{1+8B_S/B^{*}}{1+B_S/B^{*}},
\label{eq:twopoint}
\end{equation}
and solving for $B^{*}$ gives
\begin{equation}
B^{*}=\frac{(8-R)\,B_S}{R-1},
\label{eq:bstarfromR}
\end{equation}
with $B_S=512$ responses per rollout.
At $\mu{=}8$, our exponents give $B^{*}\approx9.3$k, $4.0$k (a direct fit over four batches gives $2.2$--$4.2$k) and $1.9$k at $\eta_S{=}10^{-6}$, $5{\times}10^{-6}$ and $10^{-5}$.
The critical batch thus falls as the learning rate grows, in the direction Eq.~\eqref{eq:bstar} predicts.
All three values lie well below $\Bsimple$: as Eq.~\eqref{eq:bstar} shows, under the square-root rule the critical batch carries a term that scales as $1/\kappa$ and is not set by the noise scale alone.
\end{remark}

\section{Experimental Details}
\label{app:details}\label{app:runs}

\paragraph{Design choices.} We disable gradient-norm clipping because a fixed norm cap acts differently across batch sizes and therefore interferes with the square-root rule, and we test it separately in \S\ref{sec:failure}.
In the sustained-excursion rule, when fewer than eight rollouts remain in a run, two of the remaining rollouts above the threshold suffice.
The failure criterion checks accuracy first because some runs collapse quietly, with accuracy falling while the gradient norm stays at its baseline, and App.~\ref{app:onset} lists these runs.

\begin{table}[H]
\centering\scriptsize
\setlength{\tabcolsep}{3pt}
\begin{tabular}{lp{4.3cm}}
\toprule
prompts per rollout $P$ & 64 (S), 192 (M), 512 (L), 1024 (XL) \\
responses per prompt $G$ & 8 \\
optimizer steps per rollout $\mu$ & 8 (main), 1, 4 and 16 (controls), mini-batch of $P/\mu$ prompts \\
learning rate & $\eta_S\in\{10^{-6},5{\times}10^{-6},10^{-5}\}$ and the clip-off sweeps of App.~\ref{app:onset},
  other batches by the square-root rule \\
schedule & linear warm-up over 8192 responses, then constant \\
optimizer & AdamW, $\beta=(0.9,0.999)$, weight decay 0.1 \\
gradient-norm clip & disabled ($10^{6}$), 1.0 in one control \\
ratio clip & $\varepsilon_{\mathrm{low}}=0.2$, $\varepsilon_{\mathrm{high}}=0.28$, dual clip $c=10$ \\
KL / entropy terms & none \\
loss aggregation & token mean over the mini-batch \\
advantage & GRPO group normalization, $\epsilon=10^{-6}$ \\
sampling & temperature 1.0, top-$p$ 1.0, top-$k$ off (training and evaluation) \\
lengths & prompt $\le1024$, response $\le3072$ tokens, truncated responses kept without penalty \\
micro-batch & 4 sequences, fixed \\
data & DAPO-Math-17k (main), DeepScaleR-Preview (second dataset), less than one epoch \\
validation & 512 held-out prompts, mean@4, every 8192 responses \\
precision & bf16 training and rollouts, fp32 master weights and gradient reduction \\
parallelism & 4-GPU FSDP with parameter and optimizer offload, one vLLM engine per GPU \\
software & verl~\citep{sheng2025hybridflow} 0.9.1, vLLM~\citep{kwon2023efficient} 0.19 \\
\bottomrule
\end{tabular}
\caption{Hyperparameters shared by all runs.}
\label{tab:hparams}
\end{table}

\paragraph{Budgets.} Unless stated otherwise, runs consume 98{,}304 responses, and selected failure probes use longer budgets.

\paragraph{Choosing the learning rates.}
\label{app:ladder}
We first ran 24-rollout probes of the small batch over $\eta\in\{10^{-6},5{\times}10^{-6},10^{-5},2{\times}10^{-5}\}$.
Clipping was essentially inactive at $10^{-6}$, became regularly active at $5{\times}10^{-6}$ and $10^{-5}$, and was stronger at $2{\times}10^{-5}$.
We therefore use $10^{-6}$ as the conservative rate and the next two as the active rates.
Separate ladders are used for $\mu{=}1$, Qwen2.5-1.5B-Instruct, and Qwen2.5-Math-7B, and App.~\ref{app:instruct} describes the Instruct ladder.

\paragraph{Exact ratios at the first step.}
\label{app:noise}
Dynamic token packing makes the rollout and the recomputed log-probabilities use differently shaped micro-batches, and under bf16 this produces spurious deviations from $\rho=1$ at $k{=}0$, with the 99th percentile of $|\log\rho|$ reaching 0.03--0.12.
Fixed micro-batches of four sequences in identical order make the ratio exactly one.
All runs use this setting except the seed-0 small-batch runs in the conservative regime, where this noise stays inside the clip range: no token is clipped at $k{=}0$ in that regime, as App.~\ref{app:progress} confirms.

\paragraph{Hardware and compute.}
The 1.5B experiments used four NVIDIA RTX A6000 GPUs per run, while the 7B experiments used four H100 GPUs.
In total, the experiments used roughly 2{,}000 GPU-hours, including preliminary runs and the replay analyses.

\section{Additional Results for the Progress Clock}
\label{app:progress}\label{app:gap}\label{sec:conservative}
\begin{figure*}[t]
\centering
\includegraphics[width=\textwidth]{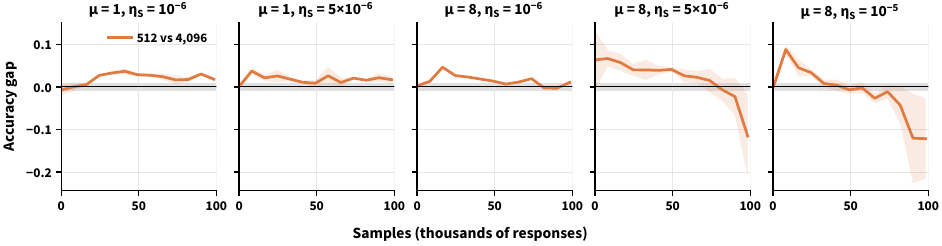}
\caption{Accuracy gap between the 512 and 4,096 batches, $J_{512}(N)-J_{4096}(N)$, for all five pairs.
The two left panels have $\mu{=}1$, where the clip cannot act, and the three right panels $\mu{=}8$.
Bands cover all seed pairings, and the gray band marks $\pm1$ accuracy point, the reference band that prior one-step scaling results use to judge approximate invariance~\citep{li2026larger}.
At $\mu{=}8$ the conservative gap is an early lead of 2--4 points that decays into the band, the $5{\times}10^{-6}$ gap reaches 8 points and stays above the band until the small batch decays late, and the $10^{-5}$ gap peaks at 9 points early and turns negative when the small batch fails.}
\label{fig:gapall}
\end{figure*}
\begin{figure*}[t]
\centering
\includegraphics[width=\textwidth]{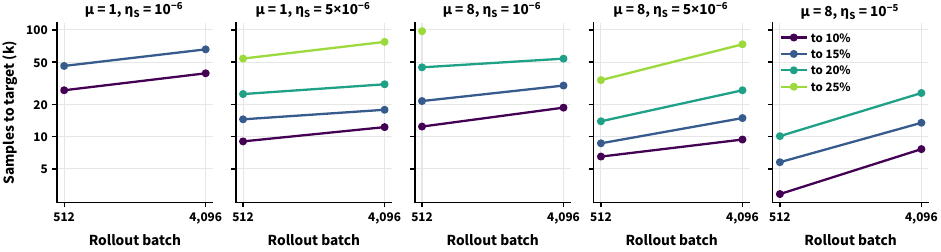}
\caption{Samples needed to first reach each accuracy target against the rollout batch, on log--log axes, for every pair.
The slope of each line is the sample-cost exponent $\alpha$ of Table~\ref{tab:levels}, a missing point means that the run never reaches the target within 98k samples, and the family with $\eta_S{=}5{\times}10^{-6}$, $\mu{=}8$ and all four batches is in Fig.~\ref{fig:clocks} (left).
The samples needed rise with the batch at every target.}
\label{fig:n2t}
\end{figure*}
This appendix gives the evidence behind \S\ref{sec:progress}.
In Table~\ref{tab:levels}, $\alpha$ is the slope of $\log N_\tau$ against $\log\Br$, pooled over the accuracy targets each pair reaches and, for the $5{\times}10^{-6}$ family, over the middle batch as well.

\paragraph{Learning-rate dependence.} At $\mu{=}8$ the small batch's lead over the large batch grows from the conservative to the active rates, as Fig.~\ref{fig:levels} shows, the dependence that the sample-cost exponents of Table~\ref{tab:levels} summarize.
At the highest rate the lead closes late in training, when the small batch reaches its failure point inside the budget.
\begin{figure}[H]
\centering
\includegraphics[width=\linewidth]{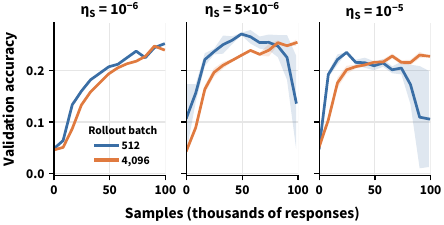}
\caption{The 512 and 4,096 batches at three learning-rate levels with $\mu{=}8$.
Color marks the rollout batch, the larger batch uses $\sqrt8$ times the smaller batch's rate $\eta_S$, and bands span two seeds where available.
The small batch's lead grows with the rate, and at the highest rate it closes late, when the small batch starts to fail.}
\label{fig:levels}
\end{figure}

\paragraph{Conservative regime.} At $\eta_S{=}10^{-6}$ fewer than $2{\times}10^{-4}$ of tokens are clipped, and none at $k{=}0$, so any gap that appears here cannot come from the clip.
One appears anyway.
With $\mu{=}1$, where the clip cannot act at all, the large batch lags the small one by 0.022 and 0.021 in the two seed pairs and needs 35--50\% more samples to reach 15\% accuracy (left panel of Fig.~\ref{fig:ctrl}), about four times the $\Dauc{=}0.005$ that separates two seeds of the same configuration.
At $\mu{=}8$ the lag is similar ($\Dauc{=}0.016$, with 30k against 22k samples to 15\% accuracy).

\begin{figure}[H]
\centering
\includegraphics[width=\linewidth]{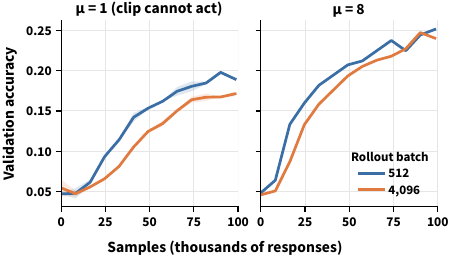}
\caption{The 512 and 4,096 batches in the conservative regime, $\eta_S{=}10^{-6}$, with $\mu{=}1$ (left, two seeds) and $\mu{=}8$ (right, one seed).
Color marks the rollout batch, and bands span the two seeds.
The larger batch lags in both, although less than $2{\times}10^{-4}$ of its tokens are clipped.}
\label{fig:ctrl}
\end{figure}
\paragraph{Clipped fraction by step.} Lemma~\ref{lem:drift} predicts that clipping builds up over the steps of a rollout, and faster for the large batch.
Figure~\ref{fig:perk} shows both.
The clipped fraction rises with $k$ at every learning rate, and the large batch's excess grows from 1.1--1.6 times the small batch's at $k{=}1$ to about 2--4 times from $k{=}3$ on, so the difference between batches accumulates within a rollout rather than being present from the first step.

\begin{figure}[H]
\centering
\includegraphics[width=\linewidth]{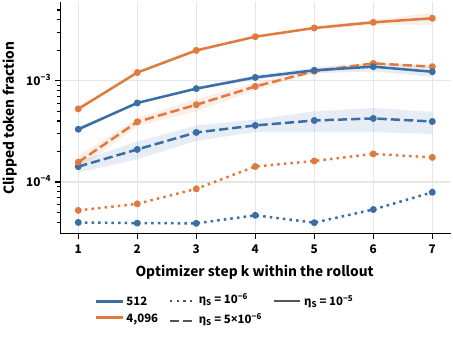}
\caption{Clipped token fraction at each optimizer step $k$ within a rollout for the 512 and 4,096 batches at three learning-rate levels with $\mu{=}8$, averaged over the first 49k samples, before any of these runs fails.
Color marks the rollout batch and line style the learning-rate level, and bands span two seeds where available.
Nothing is clipped at $k{=}0$, both fractions then rise with $k$, and the large batch's is 1.1--1.6 times the small batch's at $k{=}1$ and 2--4 times from $k{=}3$ on.}
\label{fig:perk}
\end{figure}

\paragraph{The gap over training.} Table~\ref{tab:levels} condenses each pair of learning curves into a few numbers, and Figs.~\ref{fig:gapall} and~\ref{fig:n2t} show what lies behind them.
With $\mu{=}1$, where the clip cannot act, the gap persists throughout training.
With $\mu{=}8$ it opens early and widens with the learning rate, and at the active rates it closes or reverses only when the small batch begins to fail late.
The samples needed rise with the batch at every accuracy target, so the cost does not hinge on the 15\% threshold of Table~\ref{tab:levels}.

\section{Failure-Law Fits and Failure Points}
\label{app:onset}
\begin{table*}[t]
\centering\small
\setlength{\tabcolsep}{6pt}
\begin{tabular}{lccc}
\toprule
free fit & $\beta$ & $\gamma$ & $\beta-\gamma/2$ \\
\midrule
least squares (18 runs) & 0.24 [0.00, 0.52] & 0.64 [0.29, 0.96] & $-0.08$ [$-0.23$, 0.09] \\
Tobit (21 runs, 3 censored) & 0.47 [0.19, 0.72] & 0.72 [0.32, 1.04] & 0.11 [$-0.09$, 0.32] \\
\midrule
fixed form & rms factor & log-lik. & LR test \\
\midrule
$\sqrt b/\eta$ (Eq.~\ref{eq:law}, $C{=}0.044$) & 1.53 & $-15.3$ & $p\approx0.29$ \\
$b/\eta^2$ (random walk) & 2.32 & $-24.7$ & $p<10^{-4}$ \\
$1/\eta$ (batch-blind) & 1.68 & $-23.9$ & $p<10^{-4}$ \\
free (Tobit) & 1.57 & $-14.0$ & -- \\
\bottomrule
\end{tabular}
\caption{Fits of the failure law $N_{\mathrm{fail}}\propto b^{\beta}\eta^{-\gamma}$.
Brackets give 90\% bootstrap intervals, and the square-root rule moves the failure point by the factor $b^{\beta-\gamma/2}$.
The rms factor is the geometric rms residual of the failing runs, or the fitted noise scale for Tobit, the log-likelihood is censored, and the LR test compares each fixed form with the free Tobit fit.}
\label{tab:lawfit}
\end{table*}

\begin{table*}[t]
\centering\scriptsize
\setlength{\tabcolsep}{3pt}
\begin{tabular}{llccc@{\hspace{6pt}}c@{\hspace{6pt}}llccc}
\toprule
run & clip & onset rollout & onset $N$ & acc.\ halves && run & clip & onset rollout & onset $N$ & acc.\ halves \\
\cmidrule(r){1-5}\cmidrule(l){7-11}
S $10^{-5}$ seed 0 & on  & none$^{\dagger}$ & -- & -- && S $10^{-5}$ ($\mu{=}16$) & \textbf{off} & none & -- & 34k \\
S $10^{-5}$ seed 1 & on  & 153 & 78k & 83k && M $1.7{\times}10^{-5}$ seed 1 & \textbf{off} & 20 & 31k & 33k \\
S $5{\times}10^{-6}$ seed 0 & on & 187 & 96k & -- && M $10^{-5}$ & \textbf{off} & 48 & 74k & 81k \\
S $5{\times}10^{-6}$ seed 1 & on & 156 & 80k & 92k && M $10^{-5}$ seed 1 & \textbf{off} & 24 & 37k & 48k \\
S $1.7{\times}10^{-5}$ & on & none & -- & 71k && L $2.8{\times}10^{-5}$ s0/s1 & on & none in 24 & $>$98k & -- \\
S $2{\times}10^{-5}$ & on & none & -- & 36k && L $1.4{\times}10^{-5}$ s0/s1 & on & none in 24 & $>$98k & -- \\
S $2.8{\times}10^{-5}$ & on & none & -- & 38k && L $2.8{\times}10^{-5}$ seed 0 & \textbf{off} & 7 & 29k & 29k \\
L $5.7{\times}10^{-5}$ & on & none & -- & 87k && L $2.8{\times}10^{-5}$ seed 1 & \textbf{off} & 10 & 41k & 48k \\
S $10^{-5}$ seed 0 & \textbf{off} & 81 & 41k & 46k && L $1.4{\times}10^{-5}$ & \textbf{off} & none in 15 & $>$61k & -- \\
S $10^{-5}$ seed 1 & \textbf{off} & none in 72 & $>$37k & -- && L $10^{-5}$ seed 0 & \textbf{off} & none in 36 & $>$147k & -- \\
M $1.7{\times}10^{-5}$ & \textbf{off} & none in 40$^{\S}$ & -- & 51k && L $10^{-5}$ seed 1 & \textbf{off} & 8 & 33k & 29k$^{\dagger\dagger}$ \\
S $10^{-5}$ seed 1$^{\natural}$ & \textbf{off} & 61 & 31k & -- && L $5.7{\times}10^{-5}$ & \textbf{off} & none & -- & 21k \\
S $10^{-5}$ + grad-clip 1.0 & \textbf{off} & 68 & 35k & 36k && XL $4{\times}10^{-5}$ & \textbf{off} & none in 10 & $>$82k & --$^{\flat}$ \\
S $5{\times}10^{-6}$ & \textbf{off} & 145 & 74k & -- && DeepScaleR L $2.8{\times}10^{-5}$ & \textbf{off} & 10 & 41k & 52k \\
S $2{\times}10^{-5}$ & \textbf{off} & 35$^{\sharp}$ & 18k & 19k && 7B S $2{\times}10^{-6}$ & \textbf{off} & 81$^{\sharp}$ & 41k & 45k \\
S $5{\times}10^{-6}$ seed 1 & \textbf{off} & 82 & 42k & 45k && 7B L $5.7{\times}10^{-6}$ & \textbf{off} & none$^{\dagger\dagger}$ & -- & 53k \\
S $1.7{\times}10^{-5}$ & \textbf{off} & none & -- & 29k && 7B S $10^{-6}$ & \textbf{off} & none in 192 & $>$98k & -- \\
S $2.8{\times}10^{-5}$ & \textbf{off} & 17 & 9k & 13k && L $2.8{\times}10^{-5}$ s0 & $\varepsilon{\times}\sqrt8$ & none$^{\ddagger}$ & -- & -- \\
S $10^{-5}$ ($\mu{=}4$) & \textbf{off} & 197 & 101k & 86k && L $2.8{\times}10^{-5}$ s1 & $\varepsilon{\times}\sqrt8$ & 22 & 90k & 93k \\
S $10^{-5}$ ($\mu{=}4$) seed 1 & \textbf{off} & 45 & 23k & 26k && & & & & \\
\bottomrule
\end{tabular}
\caption{Failure points of every run in the active regimes under both parts of the failure criterion (\S\ref{sec:setup}).
The columns give the sustained gradient-norm onset in rollouts and in samples, and the sample at which accuracy first halves from its best value, with -- for runs that never halve, and clip $\varepsilon{\times}\sqrt8$ is the widened radius.
Marks denote a decay from the peak without a blow-up ($^{\dagger}$), a decay from 57k ($^{\ddagger}$), a quiet collapse in which accuracy falls while the gradient norm stays near baseline except for one spike ($^{\S}$), a second seed rerun on different hardware ($^{\natural}$), a 40\% fall in accuracy from 41k without halving ($^{\flat}$), a collapse followed by recovery ($^{\dagger\dagger}$), and a KL blow-up with every response at the length cap and the gradient exactly zero afterwards, rather than a gradient-norm excursion ($^{\sharp}$).}
\label{tab:onset}
\end{table*}

Table~\ref{tab:onset} lists both failure criteria for every run in the active regimes.
Using three or five times the early median instead of four shifts the detected gradient-norm onset by at most four rollouts, except for S at $5{\times}10^{-6}$ (rollouts 151, 156 and 165 for seed 1, while seed 0's excursion at rollout 187 does not reach $5\times$).
Onsets are counted in rollouts, each of $\mu$ optimizer steps.
Some runs fail without a gradient-norm excursion: accuracy falls while the gradient norm remains near baseline, or the KL divergence between the updated policy and the rollout policy blows up, after which every response hits the length cap and the gradient is exactly zero.
Two large-batch runs collapse and then recover.
The accuracy criterion detects these cases, so the failure definition in \S\ref{sec:setup} gives it precedence.

\paragraph{Fitting the failure law.} We regress $\log N_{\mathrm{fail}}$ on $\log b$, with $b$ the responses per optimizer step, and on $\log\eta$, in two ways, and report both in Table~\ref{tab:lawfit}.
Least squares uses the eighteen runs that fail.
A right-censored normal likelihood (Tobit) adds the three runs that never fail, whose budget is a lower bound on their failure point.
Intervals are 90\% percentile bootstrap intervals over runs (2000 resamples).
We also hold the exponents at three fixed forms and compare them with the free Tobit fit by a likelihood-ratio test with two degrees of freedom.
The drift form of Eq.~\eqref{eq:law} passes the test ($p\approx0.3$), whereas the random-walk and batch-blind forms fail it ($p<10^{-4}$).
Because single failure points vary across seeds, as Table~\ref{tab:onset} shows, our conclusions rest on the aggregate scaling across 21 runs rather than on any single pair.

\section{Clipping Analyses}
\label{app:clip}

\subsection{Replay}
\label{app:replay}
\begin{figure}[!ht]
\centering
\includegraphics[width=\linewidth]{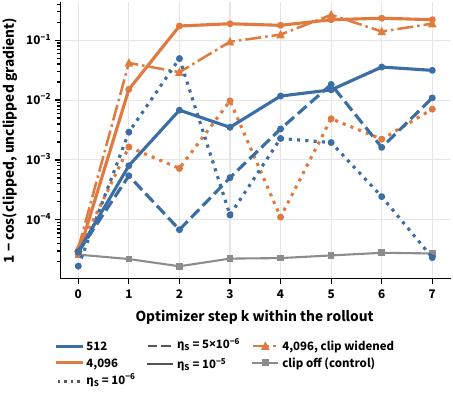}
\caption{Replayed rotation of the update by the clip, measured as one minus the cosine between the clipped and unclipped gradients, at each optimizer step $k$ within a rollout with $\mu{=}8$ (median over checkpoints).
Color marks the rollout batch and line style the learning-rate level, the dash-dot line is the 4,096 batch with a widened clip, and the gray line is the clip-off control, which marks the numerical floor.
At $k{=}0$ every ratio is one and the clip cannot act, and with $\eta_S{=}10^{-5}$ the large batch's rotation is about 0.2 from $k{=}2$ on, 6 to 50 times the small batch's.}
\label{fig:replay}
\end{figure}
Figure~\ref{fig:replay} summarizes the replays of the final checkpoint of each $\mu{=}8$ run at its training learning rate.
Each replay starts from a fresh AdamW state for both batches, so comparisons between batches are matched, although the fresh state makes the first updates somewhat larger, and absolute clipped fractions run about twice those in the training logs.
As a numerical calibration, removing the clip gives $\dclip\approx2$--$3{\times}10^{-5}$ at every $k$, which we treat as the measurement floor.
At the intermediate rate, the small batch's $\dclip$ is mostly $10^{-4}$ to $10^{-2}$ with isolated spikes.
In one mini-batch where only $7{\times}10^{-5}$ of tokens are clipped, $\dclip$ reaches 0.72, and Proposition~\ref{prop:share} then implies that these few tokens carry a gradient component comparable in norm to the full mini-batch gradient, a small-batch instance of what the large batch shows at every step.

\paragraph{Gradient-noise scale.} At the same checkpoints, with the unclipped objective, the squared norm of the gradient averaged over a whole rollout is within 4\% of the pure-noise prediction from four-response micro-batches.
The signal is therefore not resolvable against the noise, and the simple noise scale is above $6{\times}10^{4}$ responses at every checkpoint after the first few steps, far above both optimizer batches (64 and 512).
This supports assumption (A1) of \S\ref{sec:method}.
The implied critical batches of Remark~\ref{rem:bcrit} lie well below this noise scale, as Eq.~\eqref{eq:bstar} allows.

\subsection{Widened clip radius}
\label{app:fix}
\begin{figure}[!ht]
\centering
\includegraphics[width=\linewidth]{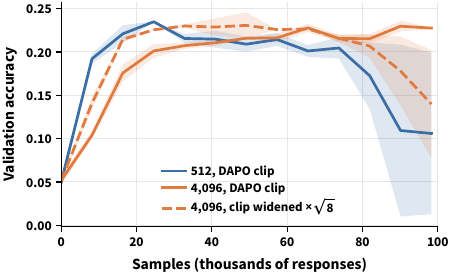}
\caption{Widened clip for the 4,096 batch with $\eta_S{=}10^{-5}$ and $\mu{=}8$, against the 512 batch.
Color marks the rollout batch and line style the clip radius, solid for the DAPO radius and dashed for the log-space radius multiplied by $\sqrt8$, and bands span two seeds.
The widened radius gives the large batch the small batch's shape, with an earlier rise and then decay, and one seed collapses.}
\label{fig:fix}
\end{figure}
We multiplied the log-space clip radius of L at $2.8{\times}10^{-5}$ by $\sqrt8$, giving $(\varepsilon_{\mathrm{low}},\varepsilon_{\mathrm{high}})=(0.468,1.01)$.
This brings its clipped fraction down to the small batch's level ($4$--$7{\times}10^{-4}$ instead of $2$--$4{\times}10^{-3}$).
Both seeds learn faster early (15\% accuracy after 10k instead of 14k samples) and then decay: seed 0 from about 57k samples to a final 0.202, below the plain L's 0.23, and seed 1 collapses at 90k samples, ending at 0.078, as Fig.~\ref{fig:fix} shows.
Replaying seed 0's checkpoints explains this.
The wider radius halves the clipped fraction but lets the ratios drift further (99th percentile of $|\log\rho|$ at $k{=}7$: 0.66--0.75 against 0.45--0.51), and the tokens that are still clipped carry the same share of the gradient (0.47--0.53 against 0.50--0.55).
Widening the clip moves the large batch toward the small batch's behavior: it learns faster early, but loses much of its stability advantage, and one seed eventually collapses.

\section{Checkpoint Robustness}
\label{app:instruct}\label{sec:secondmodel}
The progress result carries over to Qwen2.5-1.5B-Instruct, the general-purpose checkpoint of the same family: the small batch leads early at both $\mu{=}1$ and $\mu{=}8$, although this checkpoint trains in a markedly different regime.
At a matched learning rate, its per-step KL is about ten times larger, its within-rollout log-ratio spread is 2.7 times larger, and its entropy is 0.5--0.8 rather than about 0.2.
At one step per rollout both batches learn at $5{\times}10^{-6}$, the small batch leading over the first 30k samples.
At $\mu{=}8$ a 24-rollout ladder puts its stable window at $1$--$3{\times}10^{-6}$, an order of magnitude below the math model's.
At $2{\times}10^{-6}$ S is stable for its whole budget, peaks at 0.102 after 25k samples and ends at 0.082.
Its L partner is also stable but much slower: $0.058$, $0.075$ and $0.076$ at 16k, 49k and 98k samples against $0.088$, $0.102$ and $0.082$, as the right panel of Fig.~\ref{fig:inst} shows.
The small batch leads at every evaluation, reaches 8\% accuracy after 14k samples against 86k, and $\alpha{=}0.67$.

\begin{figure}[!ht]
\centering
\includegraphics[width=\linewidth]{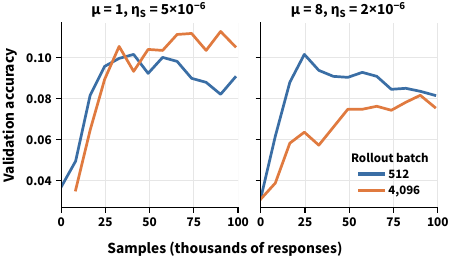}
\caption{Qwen2.5-1.5B-Instruct with one seed per run, comparing the 512 and 4,096 batches with $\mu{=}1$ and $\eta_S{=}5{\times}10^{-6}$ (left) and with $\mu{=}8$ and $\eta_S{=}2{\times}10^{-6}$ (right).
Color marks the rollout batch.
The small batch leads early in both settings.}
\label{fig:inst}
\end{figure}

\section{Batch-Ramp Switch-Point Sweep}
\label{app:sweep}
\begin{figure}[!ht]
\centering
\includegraphics[width=\linewidth]{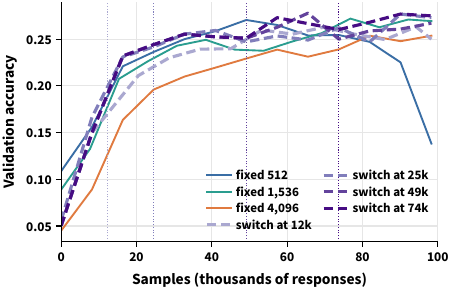}
\caption{Switch-point sweep with $\eta_S{=}5{\times}10^{-6}$ and $\mu{=}8$, in which one trajectory of the 512 batch (seed 0) switches to the 4,096 batch after 12k, 25k, 49k or 74k samples.
Dashed lines are the switched runs, darker for later switches, solid lines the fixed 512, 1,536 and 4,096 batches (seed means), and dotted verticals the switches.
Switches at 25k, 49k and 74k perform similarly, and only the earliest, at 12k, gives up part of the small batch's lead.}
\label{fig:sweep}
\end{figure}
Figure~\ref{fig:sweep} varies only the switch point along a common small-batch trajectory, which separates the timing of the ramp from variation in first-stage training.
Each second stage starts from the corresponding first-stage checkpoint with a fresh optimizer state and a one-rollout warm-up, and its first evaluation is within 0.016 accuracy of the trajectory before the switch.
The ramp is robust to the switch point: switches from 25k to 74k samples perform similarly, provided the switch precedes failure, and only the earliest switch, at 12k, gives up part of the small batch's progress advantage.

\end{document}